\newtheorem{prop}{Proposition}
\newcommand\blfootnote[1]{%
  \begingroup
  \renewcommand\thefootnote{}\thanks{#1}%
  \addtocounter{footnote}{-1}%
  \endgroup
}
\newcolumntype{H}{>{\setbox0=\hbox\bgroup}c<{\egroup}@{}}
\newcommand{\eg}{\textit{e}.\textit{g}.}
\definecolor{bright}{rgb}{0.8, 0.1, 0}
 \newcommand\ignore[1]{}
\newcommand{\ourmethod}{{NAO}}
\title{Norm-guided latent space exploration for text-to-image generation}
\author{%
  Dvir Samuel$^{1,2}$\blfootnote{Correspondence to: Dvir Samuel <dvirsamuel@gmail.com>}, Rami Ben-Ari$^{2}$, Nir Darshan$^{2}$, Haggai Maron$^{3,4}$, Gal Chechik$^{1,4}$ \\
  $^{1}$Bar-Ilan University, Ramat-Gan, Israel \\
  $^{2}$OriginAI, Tel-Aviv, Israel \\
  $^{3}$Technion, Haifa, Israel\\
  $^{4}$NVIDIA Research, Tel-Aviv, Israel\\
}
\begin{document}
\maketitle

\begin{abstract}
Text-to-image diffusion models show great potential in synthesizing a large variety of concepts in new compositions and scenarios. However, the latent space of initial seeds is still not well understood and its structure was shown to impact the generation of various concepts. Specifically, simple operations like interpolation and finding the centroid of a set of seeds perform poorly when using standard Euclidean or spherical metrics in the latent space. This paper makes the observation that, in current training procedures,  diffusion models observed inputs with a narrow range of norm values. This has strong implications for methods that rely on seed manipulation for image generation, with applications to few-shot and long-tail learning tasks. To address this issue, we propose a novel method for interpolating between two seeds and demonstrate that it defines a new non-Euclidean metric that takes into account a norm-based prior on seeds. We describe a simple yet efficient algorithm for approximating this interpolation procedure and use it to further define centroids in the latent seed space. We show that our new interpolation and centroid techniques significantly enhance the generation of rare concept images. This further leads to state-of-the-art performance on few-shot and long-tail benchmarks, improving prior approaches in terms of generation speed, image quality, and semantic content.
\end{abstract}

\section{Introduction}
Text-to-image diffusion models demonstrate an unprecedented ability to generate new and unique images.
They map random samples (seeds) from a high-dimensional space, conditioned on a user-provided text prompt, to a corresponding image. Unfortunately, the \textit{seed space}, and the way diffusion models map it into the space of natural images are still poorly understood, directly affecting  generation quality. For example, current diffusion models have difficulty generating images of rare concepts, but performing optimization in the seed space can alleviate this issue~\cite{SeedSelect}. Our limited understanding of the seed space is further demonstrated by the fact that standard operations on seeds, such as interpolating between two seeds or finding the centroid of a given set of seeds, often result in low-quality images with poor semantic content (Figure \ref{fig:fig1} left, first two rows).  As a result, methods based on the exploration and manipulation of seed spaces face a considerable challenge.

The aim of this paper is to propose simple and efficient tools for exploring the seed space and to demonstrate how these tools can be used to generate rare concepts.
Our main observation is that a specific property, the norm of the seed vector, plays a key role in how a seed is processed by the diffusion model.
In more concrete terms, since seeds are sampled from a multidimensional Gaussian distribution, the norm of the seeds has a  
$\chi$ distribution. For high dimensional Gaussian distributions, such as the ones used by diffusion models, the $\chi$ distribution is strongly concentrated around a specific positive number. Consequently, diffusion models tend to favor inputs with this norm, resulting in lower-quality images when the norm of the seed differs from that value.

To address this issue, we propose to use a prior distribution over the norms in the seed space based on the $\chi$ distribution to guide exploration. While prior-based exploration techniques for seed spaces have been proposed before \cite{Arvanitidis2019FastAR,Arvanitidis2021APA}, the advantage of our prior is that it does not rely upon an expensive estimation of the empirical data distribution nor on complex computations and can be applied to very high-dimensional latent spaces. Yet, as we show below, our prior still significantly improves exploration techniques in the seed space.

As a first step, we propose a novel method for interpolating between two seeds. In contrast to Linear Interpolation (LERP) or Spherical Linear Interpolation (SLERP)~\cite{SLERP}, we formulate this problem as finding a likelihood-maximizing path in seed space according to the aforementioned prior. In addition to providing us with an interpolating path, we also demonstrate that the optimal value of this optimization problem defines a new non-Euclidean metric structure over the seed space. Figure \ref{fig:fig1} compares our interpolation paths to two other frequently used interpolation methods in 2D and in image space. The improvement of the image quality along the path is evident. 
Specifically, the 2D example (right panel) illustrates that LERP and SLERP paths cross low-probability areas whereas our path maintains a high probability throughout. The same phenomenon is shown for images (left panel) where it is apparent that intermediate points in the paths generated by the baseline methods have a significantly lower quality.

As a next step, we build on our newly defined metric to define a generalized centroid for a set of seeds. We extend the standard definition ... and define the standard definition of the centroid in Euclidean spaces, and define the centroid as the point that minimizes the distances to the seeds according to the new distance function (also known as the Fréchet mean for that given metric). We show how to discretize the two optimization problems above and solve them using a simple and efficient optimization scheme. 
We call our approach \textbf{\ourmethod{}} for \textit{Norm-Aware Optimization}.

We evaluate \ourmethod{} extensively. First, we directly assess the quality of images generated by our methods, showing higher quality and better semantic content. 
Second, we use our seed space interpolation and centroid finding methods in two tasks: (1) Generating images of rare concepts, and (2) Augmenting semantic data for few-shot classification and long-tail learning. For these tasks, our experiments indicate that seed initialization with our prior-guided approach improves SoTA performance and at the same time has a significantly shorter running time (up to X10 faster) compared to other approaches.

\begin{figure}[t!]
    \centering 
    \includegraphics
    [trim=35 0 0 0, clip,width=0.63\columnwidth]{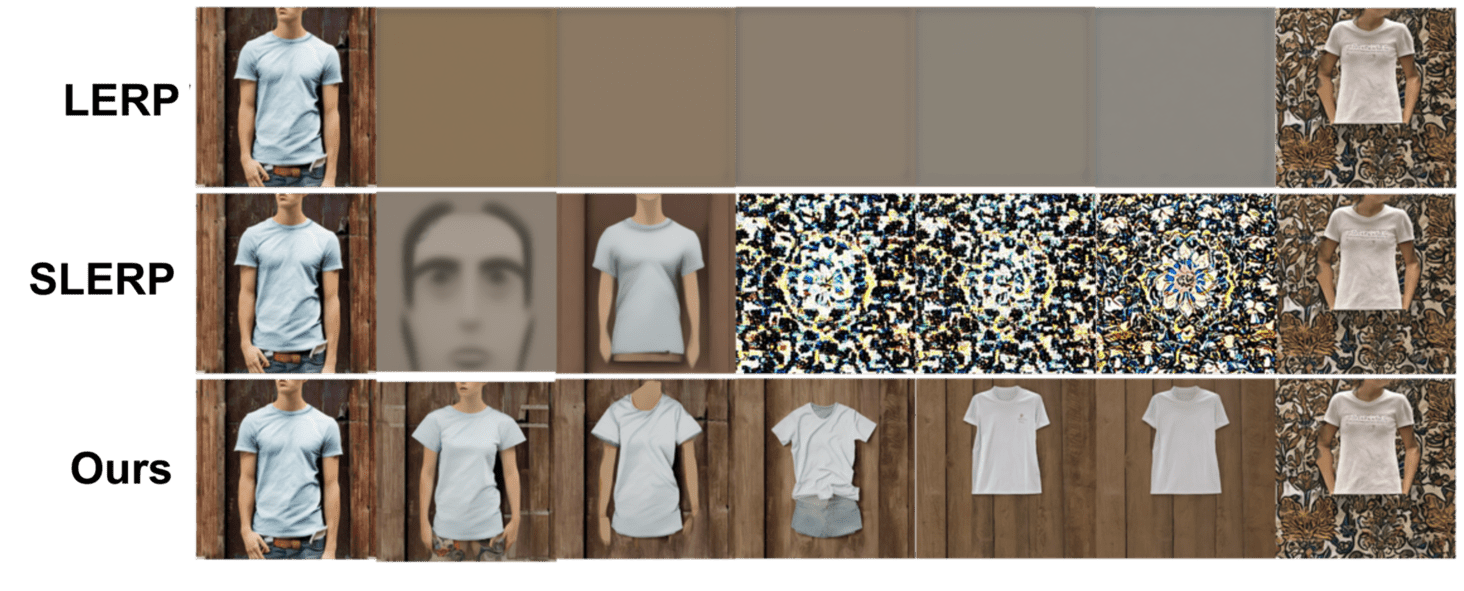}
    \hspace{10pt}
    \includegraphics[width=0.33\columnwidth]{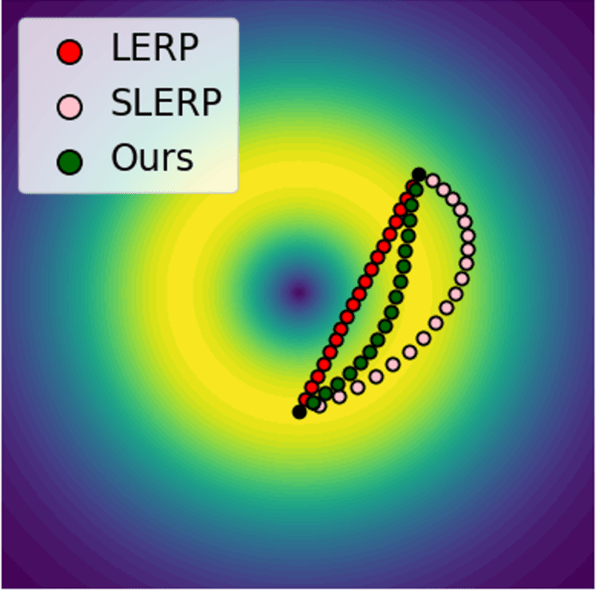}

    \caption{\textbf{The interplay between interpolation and input distribution.}  
    \textbf{(Left)} Visual comparison of three interpolation methods between a pair of seeds in the high-dimensional space of Stable Diffusion~\cite{StableDiffusion}. 
    Interpolation paths computed by linear  (LERP) and spherical SLERP~\cite{SLERP} interpolation produce seeds that result in images that are flat or just noise, whereas our approach is capable of generating images that are meaningful (quantified in Table \ref{table:interpolation_fid}).
    \textbf{(Right)} Paths found using linear (LERP), spherical (SLERP), and likelihood-based interpolation methods in 2D space, where the norm of samples has a $\chi$ distribution. Colors denote the log of the probability distribution function. The interpolation path computed by NAO traverses areas that have the desired norm (yellow, high likelihood).
    } 
    \label{fig:fig1}
\end{figure}

\section{Related Work}

\noindent\textbf{Text-guided diffusion models.}
Text-guided diffusion models involve mapping random seed (noise) $z_T$ and textual condition $P$ to an output image $z_0$ through a denoising process~\cite{ramesh2022hierarchical, saharia2022photorealistic, balaji2022ediffi}. The inversion of this process can be achieved using a deterministic scheduler (e.g. DDIM~\cite{DDIM}), allowing for the recovery of the latent code $z_T$ from a given image $z_0$. See a detailed overview in the supplemental.

\noindent\textbf{Rare concept generation with text-to-image models.}
Diffusion models excel in text-to-image generation \cite{ramesh2022hierarchical, saharia2022photorealistic, balaji2022ediffi}, but struggle with rare fine-grained objects (\eg payphone or tiger-cat in StableDiffusion~\cite{StableDiffusion}) and compositions (\eg shaking hands) \cite{SeedSelect,compositionalDM_ECCV22}. Techniques like pre-trained image classifiers and text-driven gradients have been proposed to improve alignment with text prompts, but they require pre-trained classifiers or extensive prompt engineering \cite{dhariwal2021diffusion, ho2022classifier, nichol2021glide, saharia2022photorealistic, liu2022design, marcus2022very, wang2022diffusiondb}. Other approaches using segmentation maps, scene graphs, or strengthening cross-attention units also face challenges with generating rare objects \cite{avrahami2022spatext, gafni2022make, zhao2019image, feng2022training, chefer2023attend}. SeedSelect~\cite{SeedSelect} is a recent approach that optimizes seeds in the noise space to generate rare concepts. However, it suffers from computational limitations and long generation times. This paper aims to address these limitations by developing efficient methods that significantly reduce generation time while improving the quality of generated images.

\noindent\textbf{Latent space interpolation.}
Interpolation is a well-studied topic in computer graphics~\cite{thevenaz2000image,han2013comparison,siu2012review}. Linear Interpolation (LERP) is commonly used for smooth transitions by interpolating between two points in a straight line. Spherical Linear Interpolation (SLERP)~\cite{SLERP}, on the other hand, offers computing interpolation along the arc of a unit sphere, resulting in smoother transitions along curved paths. Image interpolations in generative models are obtained by three main approaches: (1) Linear or spherical interpolation between two latent vectors \cite{Abdal2019Image2StyleGANHT,Shen2019InterpretingTL,Richardson2020EncodingIS,Zhu2020InDomainGI}, (2) Image-to-Image translation approaches~\cite{Liu2022SmoothIT} and (3) Learning an interpolation funciton or metrics based on the data ~\cite{Chen2019HomomorphicLS,Shen2019InterpretingTL,Karras2017ProgressiveGO,Arvanitidis2019FastAR}. \cite{Arvanitidis2017LatentSO} observed that linearly traveling a normally distributed latent space leads to sub-optimal results and proposed an interpolation based on a Riemannian metric. \cite{Arvanitidis2021APA} further proposed a methodology to approximate the induced Riemannian metric in the latent space with a locally conformally flat surrogate metric that is based on a learnable prior.
Note, that as opposed to our approach, these priors do not have a closed-form solution, they work on a relatively low dimensional latent space of a VAE (constrained and compact latent space), and they learn the metric from the data itself. In this paper, we do not assume any of the above. We introduce a novel interpolation approach that effectively use the inherent structure of the latent space to achieve correct interpolation without any additional data, on the high-dimensional seed space of a diffusion model.

\noindent\textbf{Data Augmentation via Latent Space Exploration.}
Previous methods, such as \cite{LiuDataAug,Devries2017DatasetAI} and \cite{Azuri2020LearningFS}, have proposed techniques for semantic data augmentation using the latent space of generative models. These methods involve imposing uniform latent representations and applying linear interpolation or learning mappings to sample specific areas in the latent space. However, these approaches require training generative models from scratch. In contrast, this paper demonstrates a more efficient approach by utilizing the latent space of a pre-trained diffusion model for creating data augmentations without the need for additional model fine-tuning.

\begin{figure}[t!]
    \centering   
    \includegraphics[width=0.79\columnwidth]{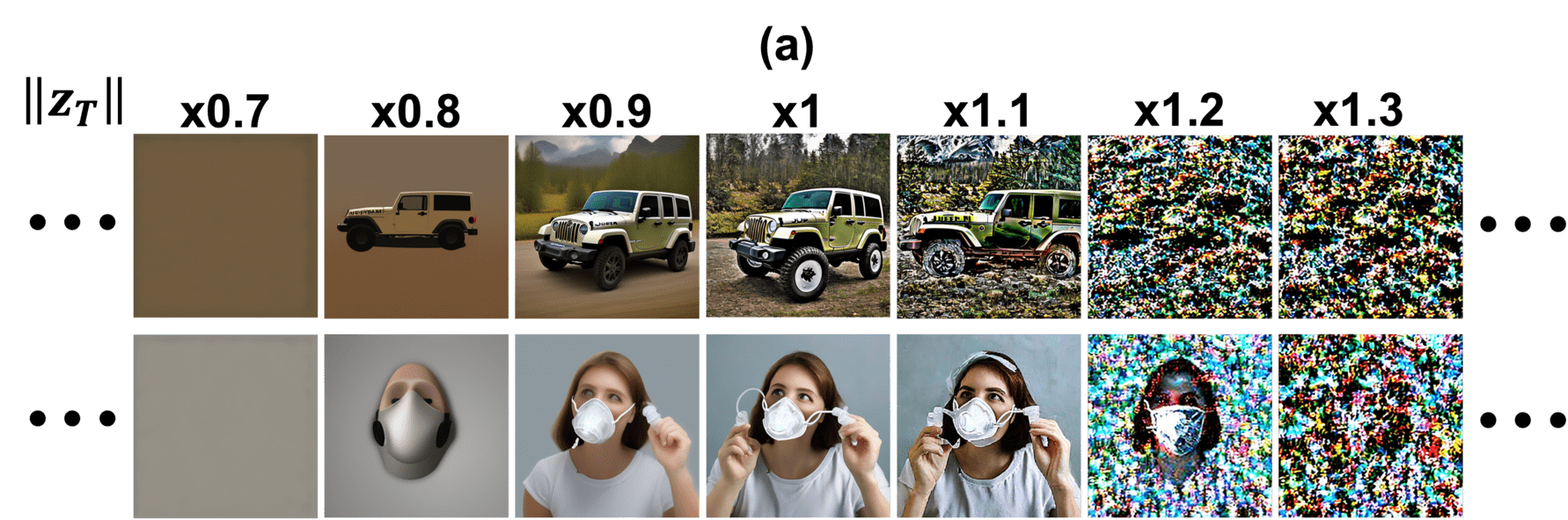}
    \includegraphics[width=0.2\columnwidth]{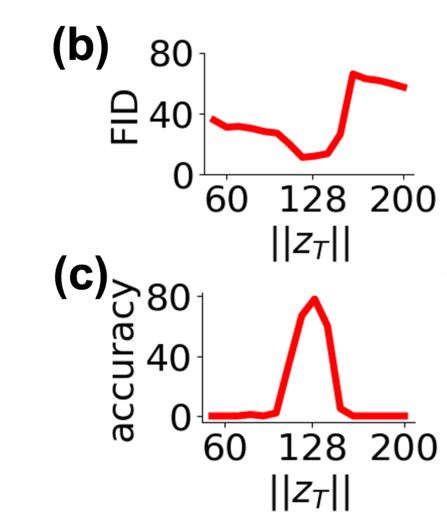}
    \caption{\textbf{(a)} Gradual changes to the norm of a fixed seed around a middle value with a norm of $\sqrt{d}=128$. Images are generated by Stable Diffusion~\cite{StableDiffusion}. The visual quality of generated images degrades as the norm diverges away from $\sqrt{d}$ at the left and right end. \textbf{(b)} Mean per-class FID of the generated images in relation to  seed norm (lower is better). \textbf{(c)} Mean per-class accuracy of the generated images, as determined by a state-of-the-art pre-trained classifier, as a function of  seed norm. Plots (b) and (c) show that image quality degrades as the seed norm moves away from $\sqrt{d}$.} 
    
    \label{fig:norm_examples}
\end{figure}

\section{A norm-based prior over seeds}
\label{sec:norm_prior}

We start with reviewing statistical properties of samples in a seed latent space $z_T \in \mathbb{R}^d$, with $d$ denoting the dimension of the space.  

In diffusion models, it is common to sample $z$  
from a high-dimensional  standard  Gaussian  distribution $z_T\sim \mathcal{N}(0,I_d)$. For such multivariate Gaussians,  
the $L_2$ norm of samples has a  $\chi$ (Chi) distribution:
$||z_T|| = \sqrt{\sum^{d}_{i=0}{{z^{i}}}^2} \sim \chi^d = ||z_T||^{d-1}e^{-||z_T||^2/2} / (2^{d/2-1}\Gamma(\frac{d}{2}))$
where $\Gamma(\cdot)$ is the Gamma function and $\| \cdot \|$ is the standard Euclidean norm. 
\begin{wrapfigure}[6]{r}{0.25\textwidth}
    \includegraphics[width=0.25\columnwidth]{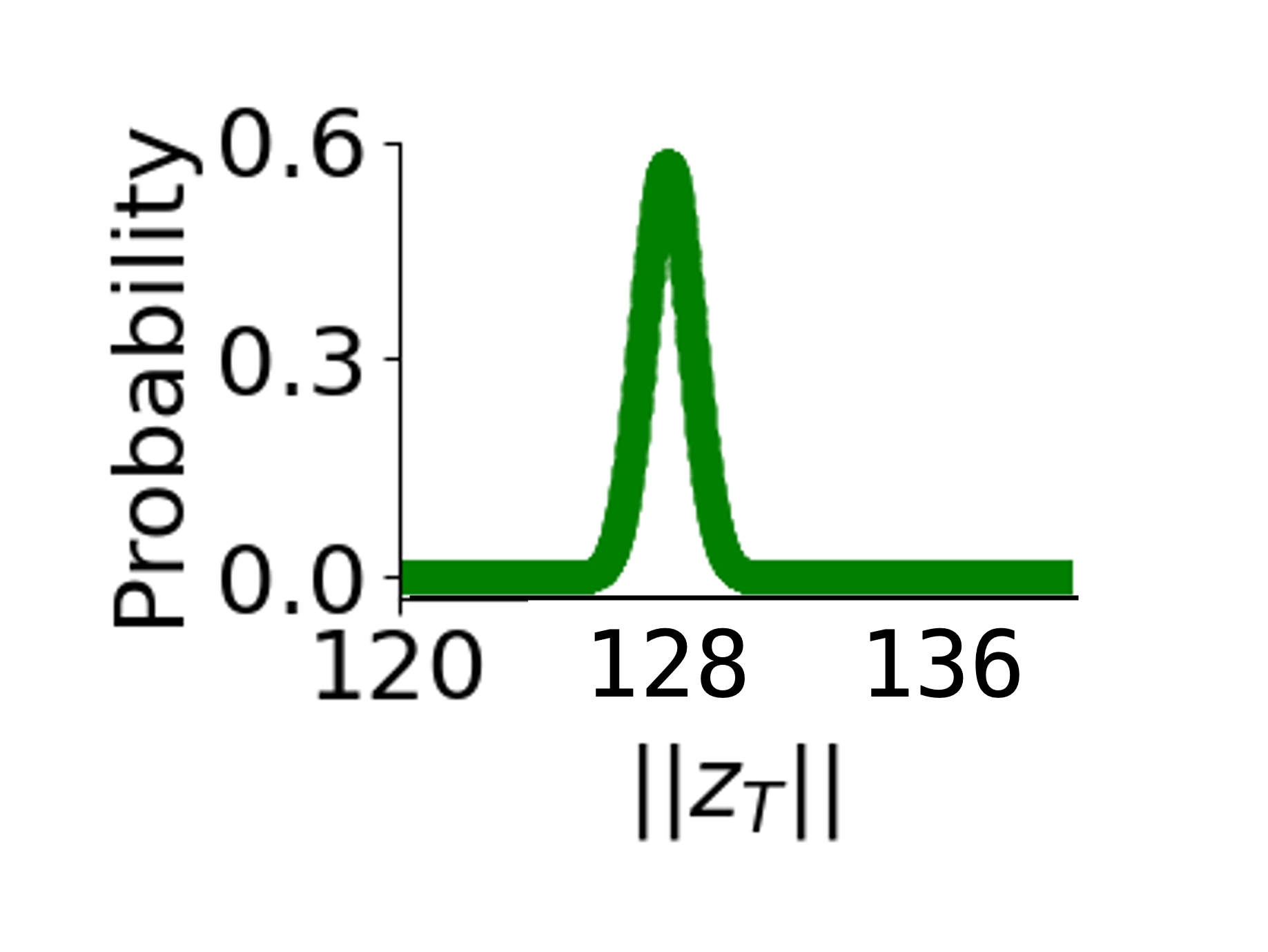}
\end{wrapfigure}

Importantly, as the dimension grows, the distribution of the norm tends to be highly concentrated around the mean, since at a high dimension, the variance approaches a constant 0.5.   
This strong concentration is illustrated in the inset figure on the right, for $d=16384=128^2$, the seed dimension used by Stable Diffusion~\cite{StableDiffusion}. At this dimension, the mean is also very close to the mode of the distribution, and both approach $\sqrt{d}=128$. This property means that samples drawn from a multi-variate high-dimensional Gaussian distribution are concentrated around a specific value $r = \text{mode}(\chi^d) \approx \sqrt{d}$.
Our key observation is that diffusion models are trained with inputs sampled from the above normal distribution, and therefore the models are only exposed to inputs with norm values close to $r$ during training. We hypothesize that this causes the model to be highly biased toward inputs with similar norm values.    

We conducted several experiments to validate this bias. First, we inspected the visual quality of images generated with different norm values, all sharing the same direction of the seed vector. 
Figure \ref{fig:norm_examples}(a) visually illustrates the sensitivity of the Stable Diffusion model to the input norm, showing that quality degrades as the norm drifts away from the mode.
Second, we conducted a systematic quantitative experiment and measured the impact of seed norm on image quality in terms of FID and classification accuracy scores using an ImageNet1k pre-trained classifier. Figures \ref{fig:norm_examples}(b)-\ref{fig:norm_examples}(c), show again that image quality depends on the seed having a norm close to the mode. Full details of these experiments are given in Section \ref{sec:eval_interpolation_centroid} and supplemental material.

We conclude that the norm of the seed constitutes a key factor in the generation of high-quality images. Below we describe how this fact can be used for seed optimization and interpolation.

\section{Norm-guided seed exploration}
Based on the above results, 
we define a prior over the seed space as $\mathcal{P}(z_{T}):= \chi^d(||z_{T}||)$. This probability density function represents the likelihood of a seed with norm $||z_T||$ to be drawn from the Gaussian distribution.
We now describe simple and efficient methods for seed interpolation and centroid finding using that prior.

\subsection{Prior induced interpolation between two seeds}
\label{sec:methood_path}

We first tackle the task of finding an interpolation path between the seeds of two images. 
The derivation of this interpolation path illustrates the advantages of using the prior in a simple setup, and will also be used later for finding centroids for sets of seeds. 
As seen in Figure \ref{fig:fig1} (see also Figure \ref{fig:interpolation}), a linear interpolation path between seeds consists of seeds that yield low-quality images. Instead, we define a better path $\gamma:[0,1]\rightarrow \mathbb{R}^d$ as the solution to the following optimization problem:  Given two images, $I_1$ and $I_2$, and their corresponding inversion seeds, $z^{1}_{T}$ and $z^{2}_{T}$, derived by inversion techniques (e.g. DDIM Inversion~\cite{DDIM}), we aim to maximize the log-likelihood of that path under our prior, defined as the line-integral of the log-likelihood of all points on the path. 

Equivalently, we minimize the negative log-likelihood of the path, which is strictly positive, yielding 
\begin{eqnarray}\label{eq:interpolation}
    \underset{\mathbf{\gamma}}{\inf}~ 
     -\int_\gamma \log{\mathcal{P}(\gamma)}ds  \quad \text{s.t.} \quad  \gamma(0)= z^{1}_{T}, \gamma(1)=z^{2}_{T}. 
\end{eqnarray}

Here, the infimum is taken with respect to all differentiable curves $\gamma$ and  $\int_\gamma W(\gamma)ds$ denotes the line integral of a function $W:\mathbb{R}^d\rightarrow \mathbb{R}$ over the curve $\gamma$ \footnote{When $\gamma$ is differentiable, the integral can be calculated using the following formula: $\int_0^1 W(\gamma(t))\|\gamma'(t)\|dt$}. We denote the optimal value obtained for optimization problem \eqref{eq:interpolation} as $f(z^{1}_{T},z^{2}_{T})$. 
It turns out that $f$ defines a (non-euclidean) distance on the seed space. This is stated formally in the following proposition.

\begin{prop}\label{prop:dist}
For any distribution yielding strictly positive negative log-likelihood, and specifically when  $\mathcal{P}$ is the $\chi^d$ distribution, then $f(\cdot,\cdot)$ is a distance function on $\mathbb{R}^d$.
\end{prop}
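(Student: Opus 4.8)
The plan is to verify the four metric axioms for $f$ one at a time, working throughout with the fixed weight $W:=-\log\mathcal{P}$, which for $\mathcal{P}=\chi^d$ is continuous and strictly positive on $\mathbb{R}^d\setminus\{0\}$ and tends to $+\infty$ as $\|z\|\to 0$ (since $d\ge 2$); the argument is identical for any continuous, everywhere-subunit density. Three of the axioms are essentially immediate. \emph{Finiteness:} the straight segment $\gamma(t)=(1-t)z^{1}_{T}+t\,z^{2}_{T}$ is differentiable and $W$ is bounded on its image (or has at most an integrable logarithmic singularity, if the segment meets the origin), so the value of problem~\eqref{eq:interpolation} is finite and $f(z^{1}_{T},z^{2}_{T})<\infty$. \emph{Non-negativity and $f(z,z)=0$:} the integrand $W(\gamma(s))>0$, so every admissible curve contributes a non-negative amount, while the constant curve $\gamma\equiv z$ gives value $0$. \emph{Symmetry:} the line integral $\int_\gamma W\,ds$ is taken with respect to arc length, hence is invariant under reparametrization; reversing a curve $\gamma$ joining $x$ to $y$ produces a curve $t\mapsto\gamma(1-t)$ joining $y$ to $x$ with the same value, and reversal is a bijection between the two admissible families, so the infima agree and $f(x,y)=f(y,x)$.

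\emph{Identity of indiscernibles.} The only place a quantitative estimate is needed is to show $f(x,y)>0$ when $x\neq y$, because $f$ is an infimum over an infinite family of curves. Put $\delta=\tfrac{1}{2}\|x-y\|>0$. On the compact ball $\bar{B}(x,\delta)$ the weight $W$ is lower semicontinuous and strictly positive (continuous off the origin, and $+\infty$ at the origin), hence attains a positive minimum $c>0$ there. Any admissible curve $\gamma$ from $x$ to $y$ must contain a sub-arc that starts at $x$, stays in $\bar{B}(x,\delta)$, and ends on the sphere $\partial B(x,\delta)$: take the first time $t^{*}$ with $\|\gamma(t)-x\|=\delta$, which exists by the intermediate value theorem applied to $t\mapsto\|\gamma(t)-x\|$. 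This sub-arc has Euclidean length at least $\delta$, so $\int_\gamma W\,ds\ge c\,\delta$, and taking the infimum over $\gamma$ yields $f(x,y)\ge c\,\delta>0$. (For $\chi^d$ one may even note that the peak of the $\chi^d$ density lies below $1$, so $W\ge c_{0}>0$ globally and $f(x,y)\ge c_{0}\|x-y\|$ directly.)

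\emph{Triangle inequality.} Fix $x,y,z$ and $\varepsilon>0$, and choose near-optimal differentiable curves $\gamma_{1}$ from $x$ to $y$ and $\gamma_{2}$ from $y$ to $z$ with $\int_{\gamma_{1}}W\,ds\le f(x,y)+\varepsilon$ and $\int_{\gamma_{2}}W\,ds\le f(y,z)+\varepsilon$. Their concatenation is a piecewise-differentiable curve from $x$ to $z$ whose line integral, by additivity of $\int W\,ds$ under concatenation, is at most $f(x,y)+f(y,z)+2\varepsilon$. Since the infimum of $\int_\gamma W\,ds$ over differentiable curves equals the infimum over piecewise-differentiable curves (round the corner at the junction over an arbitrarily short arc, which by continuity and local boundedness of $W$ changes the value by an arbitrarily small amount), we obtain $f(x,z)\le f(x,y)+f(y,z)+2\varepsilon$; letting $\varepsilon\to 0$ finishes the proof.

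\emph{Main obstacle.} Every step is elementary, and the two points that deserve care are (i) the quantitative lower bound used for identity of indiscernibles, which is exactly what forces the use of continuity, strict positivity, and (for $\chi^d$) the blow-up of $W$ at the origin; and (ii) reconciling the ``differentiable curves'' formulation of $f$ with the corner-introducing operations of concatenation and corner-rounding, i.e.\ checking that passing to piecewise-$C^{1}$ curves leaves the infimum unchanged. Neither is deep, but both are where a careful write-up should spend its words.
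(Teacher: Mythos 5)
Your proof is correct and follows essentially the same route as the paper's: direct verification of the metric axioms, with a compactness/positive-minimum argument on a ball to obtain the strictly positive lower bound when $x\neq y$, and concatenation of near-optimal curves for the triangle inequality. The only substantive differences are refinements in your favor: you center the ball at $x$ with radius $\tfrac{1}{2}\|x-y\|$ so that only the exit case arises (the paper's ball contains both endpoints and so needs two cases), you handle the blow-up of $-\log\chi^d$ at the origin where the paper's blanket continuity assumption on $W$ technically fails, and you note that for $\chi^d$ one gets $f(x,y)\ge c_0\|x-y\|$ directly from a global positive lower bound on $-\log\mathcal{P}$.
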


See the supplementary material for proof. It is important to note that the optimization problem does not only provide us with a path that maximizes the log-likelihood of our prior, but it also defines a new metric structure on the seed space that will prove useful in Section \ref{sec:methood_centroid}.

To approximate the solution to problem \eqref{eq:interpolation} in practice, we discretize the path into a sequence of piece-wise linear segments, that connect a series of points  $z^{1}_{T}=x_0,\dots,x_n=z^{2}_{T}$ and replace the integral with its corresponding Riemann sum over that piece-wise linear path:
\begin{equation}
    \begin{aligned}
    & \underset{x_0,\dots,x_n}{\text{minimize}}
    & & -\sum_{i=1}^n \log{\mathcal{P}\bigg(\frac{x_i+x_{i-1}}{2}\bigg) \|x_i-x_{i-1} \|}\\
    &  & &\text{s.t.} \quad  x_0= z^{1}_{T}, x_n=z^{2}_{T}, \quad \mathcal{C}(x)=|x_i-x_{i-1} \|-\delta \leq 0 , i\in \{1,\ldots,n\}
    \end{aligned}
    \end{equation}
To facilitate a good approximation of the continuous integration, we also constrain consecutive path points to be close (see implementation at the end of Sec. \ref{sec:methood_centroid}).

\begin{figure}[t!]
    \centering 
    \includegraphics[width=1\columnwidth]{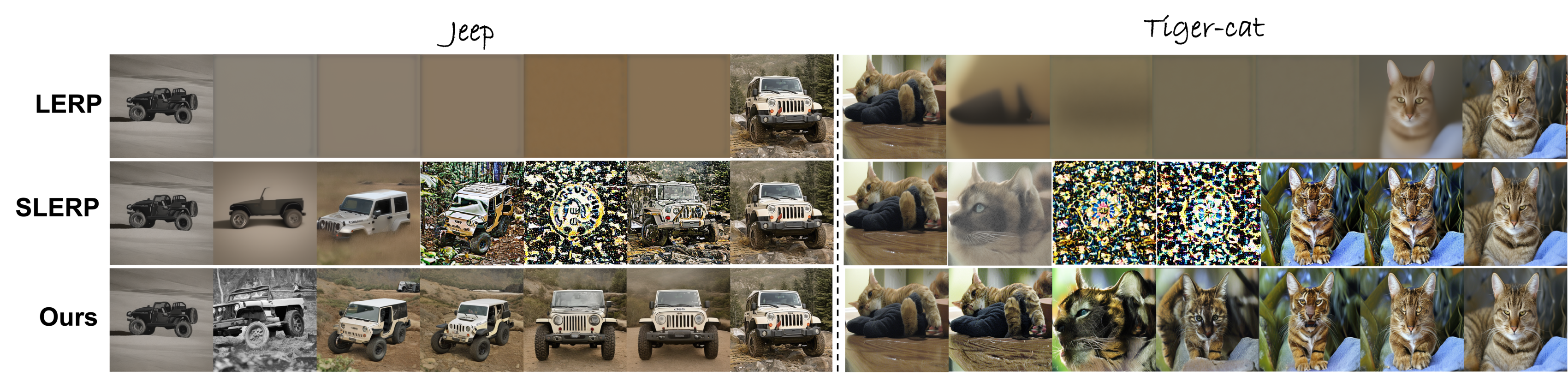}
    \caption{Qualitative comparison among different interpolation methods between two image seeds. "Jeep" is a common concept while "Tiger cat" is a rare concept. Images generated using SD~\cite{StableDiffusion}.
    }
    \label{fig:interpolation}
\end{figure}

Figures \ref{fig:fig1}-\ref{fig:interpolation} illustrate paths resulting from the optimization of the discretized optimization problem.  Our optimized path consistently produces higher-quality images compared to other methods. A quantitative evaluation is given in Section \ref{sec:eval_interpolation_centroid}.

\subsection{Prior induced centroid}
\label{sec:methood_centroid}
\begin{wrapfigure}[17]{r}{0.4\textwidth}
    \includegraphics[width=0.4\columnwidth]{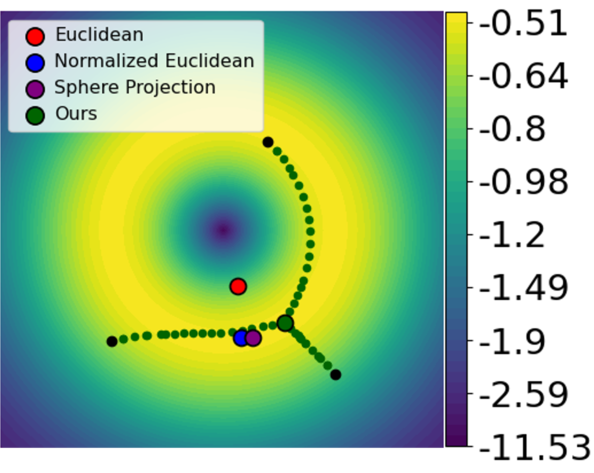}
    \captionof{figure}{Comparing different centroid finding methods in 2D space on the contour of the $\chi$ distribution (log) PDF. 
    }
    \label{fig:2d_centroids}
\end{wrapfigure}

Having defined a new metric structure in seed space,  we are now ready to tackle the problem of finding a centroid of multiple seeds. To this end, we assume to have a set of images $\{I_1,I_2,\ldots,I_k\}$ with their inversions $\{z^{1}_{T},z^{2}_{T},\ldots,z^{k}_{T}\}$ and we wish to find the centroid of these images. For example, one possible use of such a centroid would be to find a good initialization for a seed associated with a rare concept based on a few images of that concept. This enables us to merge information between seeds and generate a more reliable fusion of the images that would not be achievable through basic interpolation between two seeds.

Recall that the centroid of a set of points is defined as the point that minimizes the sum of distances between all the points. 
Perhaps the simplest way to define a centroid in our case is by using the Euclidean distance, where the centroid definition boils down to a simple formula -  the average of the seeds. Unfortunately, as we show in Section \ref{sec:norm_prior} this definition of a centroid is not suitable for our purposes due to the incompatibility of the centroid's norm with the diffusion model.
Instead, we propose to use a generalization of the Euclidean centroid, called the Fréchet mean, which is induced by the distance function $f$ defined above. To achieve this, we formulate an optimization problem that seeks the centroid $c\in \mathbb{R}^d$ that minimizes the distances $f(c,z^{i}_{T})$ to all the seeds:

\begin{equation}
    c^{*},\gamma^{*}_1,\ldots,\gamma^{*}_k = \underset{c,\gamma_1,\dots,\gamma_k}{\text{argmin}} \left({
    - \sum_{l=1}^{k}  \int_{\gamma_i} \log{\mathcal{P}(\gamma_i)}ds }\right), 
    \label{eq:centroid}
\end{equation}

where $\gamma_i$ are paths between the common  centroid $c$ and the inversion $z^{i}_{T}$. This optimization problem can be discretized in the same way we discretized Equation \eqref{eq:interpolation}, i.e. by defining all the paths as piecewise linear paths defined by a sequence of points and adding constraints on the distances between successive points in the path. See the supplemental for a discretized equation.

\begin{figure}[t!]
    \centering 
    \includegraphics[width=0.9\columnwidth]{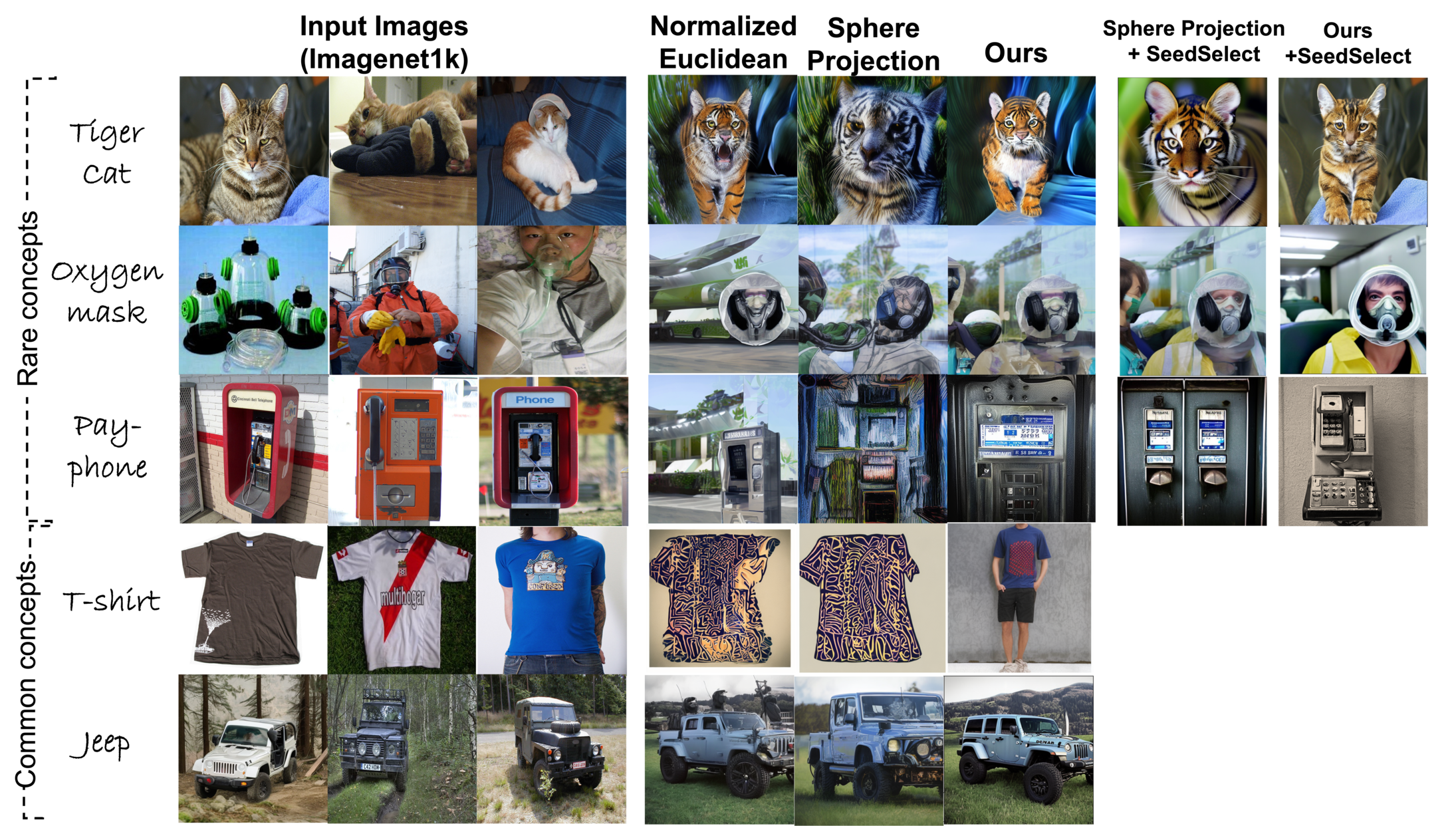}
    \caption{Comparing different centroid optimization approaches on common and rare concepts of ImageNet1k, w.r.t strong baselines.  We further initialized SeedSelect~\cite{SeedSelect} with the centroids and run it for up to 5 iterations ($\sim$20 sec on a single A100 GPU). \ourmethod{} manages to generate correct images for rare concepts (a cat in tiger-cat, a clear oxygen mask and payphone), even without SeedSelect. With SeedSelect the images are further improved. The effect is less emphasized on common concepts w.r.t to strong baselines. 
    }
    \label{fig:centroid_qual}
\end{figure}

Figure \ref{fig:2d_centroids} illustrates an example of a centroid in 2D found with our approach. Figure \ref{fig:centroid_qual} shows images resulting from the centroids found in seed space. Section \ref{sec:eval_interpolation_centroid} provides quantitative results.

\noindent\textbf{Application to seed optimization methods.}
Diffusion models often encounter significant imbalances in the distribution of concepts within their training data~\cite{SeedSelect}. This imbalance presents a challenge for standard pre-trained diffusion models, leading to difficulties in accurately generating rare concepts. One proposed solution, as suggested by \cite{SeedSelect}, involves employing a seed optimization technique. SeedSelect begins with a randomly generated seed that produces an incorrect image and progressively optimizes the seed until a plausible image is generated. However, the method described in \cite{SeedSelect} is time-consuming, requiring up to 5 minutes to generate a single image.
Our approach can serve as an initial point for SeedSelect to reduce substantially its optimization time by generating initial seeds that result in realistic and quality images.

In our case, given few images, generating new data can be obtained in the following manner: First, find a centroid $c^{*}$ and the interpolation paths $\gamma^{*}_{i}$ between image inversion in the seed space. Next, new data is generated by sampling points along the paths from the givens seeds to the centroid, and using them as initializations for SeedSelect.

\noindent\textbf{Implementation} We implemented a simple optimization algorithm that optimizes the discretized problems using Pytorch and the Adam optimizer. To speed up convergence we initialize the optimization variables:  the centroid is initialized with the Euclidean centroid and path variables are initialized as the values of the linear path between the points and the centroid. 
We implement the constraints $\mathcal{C}(x) = |x_i - x_{i-1}|-\delta \leq 0$ using a soft penalty term in the optimization, by the form $\alpha \cdot \text{ReLU}(\mathcal{C}(x))$ where $\alpha$ is a hyper-parameter. Note that there is a penalty for positive $\mathcal{C}(x)$, when the constraint is not satisfied. We note that in practice there is no guarantee that this optimization scheme converges to the optimal value of the optimization problem, however, we see in practice that high-quality paths are obtained.

\section{Experiments}
\label{sec:eval_interpolation_centroid}

We evaluate our approach in terms of image \textit{quality} and also consider generation \textit{time}. We start by studying the quality and semantic content of images generated with our seeding approach and evaluate them in three applications: (1) rare concept generation and (2) semantic data augmentation, to be tested at enhancing few-shot classification, long-tail learning, and few-shot image recognition. (3) We further tested our model on video frame interpolation, showing the results in the supplementary material. An ablation study can also be found in the supplementary material.

\noindent\textbf{Direct evaluation of interpolation and centroid finding: }
Table \ref{table:interpolation_fid} compares FID scores and the accuracy of images generated using different interpolation and centroid finding methods. For the interpolation experiment, we randomly selected a class from ImageNet1k, and obtained a pair of images and their corresponding seeds through inversion \cite{DDIM}. For interpolation methods that require path optimization, we used paths 
with 10 sampled points. We then select three seeds along the path (also for LERP and SLERP), with uniform intervals, and feed them into StableDiffusion~\cite{StableDiffusion}, to generate 3 new images per pair. We repeated the process above to obtain 100 images per class. For the centroid experiment, we used 3-25 seed points obtained from the inversion of additional images (randomly selected) from the train set. We repeated this process for 50 random ImageNet1k classes. Mean FID scores were then calculated (against real ImageNet1k images), along with mean per-class accuracy using a pre-trained classifier. See supplementary material for more details. Our optimized path consistently produces higher-quality images compared to other methods.

\begin{wraptable}[16]{r}{0.4\textwidth}
    \centering
    \scalebox{0.77}{
    \setlength{\tabcolsep}{3pt} %
  \begin{tabular}{lcc} 
   & \textbf{Acc} & \textbf{FID}\\ 
   \midrule
    \textbf{Interpolation methods}  & & \\ \midrule
    LERP & 0.0 &  50.59 \\
  SLERP~\cite{SLERP} & 30.41 & 18.44 \\ 
  \textbf{\ourmethod{}-path (ours)} & 51.59 & \textbf{6.78} \\ \bottomrule
     \textbf{Centroid computation methods}  & & \\ \midrule 
Euclidean & 0.0 &  54.88 \\
  Normalized Euclidean & 27.95 & 37.04 \\
  Sphere Projection & 40.81 & 14.28 \\
  \textbf{\ourmethod{}-centroid (ours)} & 67.24 & \textbf{5.48} \\ 
    \bottomrule
    \end{tabular}
    }
    \caption{Comparing FID and accuracy of images generated by SD through sampling from different interpolation and centroid computation methods.}
\label{table:interpolation_fid}
\end{wraptable}

We compared our approach to the following baselines: \textbf{Euclidean} is the standard Euclidean centroid calculated as the mean of the seeds. \textbf{Normalized Euclidean} is the same as Euclidean, but the centroid is projected to the sphere induced by the $\chi$ distribution. \textbf{Sphere Projection} first normalizes the seed to a sphere with radius $r=Mode(\chi)$, then finds the centroid on a sphere by optimizing a point that minimizes the sum of geodesic paths between the seeds to the centroid, as presented in \cite{Buss2001SphericalAA}. See supplemental for more details. \textbf{\ourmethod{}-path} and \textbf{\ourmethod{}-centroid} are our methods presented in sections \ref{sec:methood_path} and \ref{sec:methood_centroid}, respectively. The high accuracy and low FID levels of {\bf \ourmethod{}} in Table \ref{table:interpolation_fid} demonstrate that our interpolation approach outperforms other baselines in terms of image quality and content. We further put these results to test in downstream tasks (in Sec. \ref{sec:rare_generation}-\ref{sec:semantic_aug}).

\subsection{Rare-concept generation} 
\label{sec:rare_generation}
Following \cite{SeedSelect} we compared different centroid optimization strategies in rare-concept generation. 

\noindent\textbf{Dataset.}
The evaluation is performed on ImageNet1k classes ordered by their prevalence in the LAION2B dataset~\cite{Laion5B_dataset}. LAION2B~\cite{Laion5B_dataset} is a massive "in the wild" dataset that is used for training foundation diffusion models (\eg ~Stable Diffusion~\cite{StableDiffusion}). 

\noindent\textbf{Compared Methods.} We conducted a comparative evaluation between different centroid estimation strategies. \textbf{SeedSelect~\cite{SeedSelect}} is a baseline method where a seed is randomly sampled and then optimized to produce correct rare concepts. Here, no centroid is calculated. Other baselines were presented at the beginning of Section \ref{sec:eval_interpolation_centroid}.

\begin{SCtable}
    \scalebox{0.7}{
    \setlength{\tabcolsep}{3pt} %
    \begin{tabular}{l ccc c|c|c c} 
     & \multicolumn{7}{c}{\textbf{ImageNet1k in LAION2B}} \\
     \midrule
    \textbf{Methods} & \textbf{Many} & \textbf{Med} & \textbf{Few} & \textbf{Total Acc} & \textbf{FID}& \textbf{$\hat{T}_{Init}$} & \textbf{$\hat{T}_{Opt}$} \\ [0.5ex] 
    & n=235 & n=509 & n=256 & & & (sec) & (sec) \\ [0.5ex] 
    & \#\textgreater{}1M & 1M\textgreater{}\#\textgreater{}10K & 10K\textgreater{}\# & &&  \\ [0.5ex] 
    \midrule
    SeedSelect~\cite{SeedSelect} & 97.8 & 95.8 & 76.1 & 91.3 & 6.5 & 0 & 298 \\
    Euclidean & 0.0 & 0.0 & 0.0 & 0.0 & 51.1 & 0 & -\\
    Euclidean + SeedSelect & 0.0 & 0.0 & 0.0 & 0.0 & 65.7 & 0 & inf\\
    Normalized Euclidean & 45.3 & 39.8 & 20.1 & 36.0 &45.8 & 0 & -\\
    Normalized Euclidean + SeedSelect & 88.1 & 84.8 & 70.9 & 82.0 &19.2 & 0 & 285\\
    Sphere projection & 52.0 & 48.9 & 26.5 & 44.0 & 12.3 & 0.1 & -\\
    Sphere projection + SeedSelect & 97.1 & 95.1 & 74.2 & 90.2 &8.6 & \textbf{0.1} & 72\\
    \midrule
    \ourmethod{}-centroid (\textbf{ours}) & 56.8 & 55.6 & 41.1 & 52.2 &9.8 & 25 & -\\
    \midrule
    \ourmethod{}-centroid + SeedSelect (\textbf{ours}) & \textbf{98.5} & \textbf{96.9} & \textbf{85.1} & \textbf{94.3} &\textbf{6.4} & 25 & \textbf{29}\\
    \bottomrule
    \end{tabular}
    }
    \caption{Image generation quality measured by the accuracy of a pre-trained classifier. Average per-class accuracy is reported separately for the head, tail, and middle classes. \ourmethod{} produces the best performance in terms of image quality and is also  $\times10$ faster.}
    \label{laion-imagenet-bench}
\end{SCtable}

\noindent\textbf{Experimental Protocol.} For every class in ImageNet, we randomly sampled subsets of training images, calculated their centroid in seed space, and generated an image using Stable Diffusion directly or as input to SeedSelect~\cite{SeedSelect}. The class label was used as the prompt. This process is repeated until 100 images are generated for each class. We then used a SoTA pre-trained classifier \cite{tu2022maxvit} to test if the generated images are from the correct class or not (more details can be found in the supplementary). We use this measure to evaluate the quality of the generated image, verifying that a strong classifier correctly identifies the generated image class. We also report Mean FID score between the generated images and the real images, mean centroid initialization time $\hat{T}_{Init}$ and mean SeedSelect optimization time until convergence $T_{Opt}$ on a single NVIDIA A100 GPU.
The results summarized in Table \ref{laion-imagenet-bench} show that our \ourmethod{} method substantially outperforms other baselines, both in accuracy and in FID score. Furthermore, \ourmethod{} gives a better initialization point to SeedSelect~\cite{SeedSelect}, yielding significantly faster convergence without sacrificing accuracy or image quality.
 
Next, we evaluate \ourmethod{} as a semantic data augmentation method on two learning setups: (1) Few-shot classification, and (2) Long-tail classification. We aim to show that our approach not only achieves faster generation speed but also attains state-of-the-art accuracy results on these benchmarks.

\begin{table}[!t]
\begin{minipage}{0.3\linewidth}
    \centering
    \scalebox{0.6}{
    \begin{tabular}{l|Hc|Hc}
  {\textbf{(a)}} & \multicolumn{2}{c|}{{\bf miniImageNet}} & \multicolumn{2}{c}{{\bf CIFAR-FS }} \\ 
  {\bf Model} & {\bf 1-shot} & {\bf 5-way 5-shot} & {\bf 1-shot} & {\bf 5-way 5-shot}\\ 
  \midrule
  {Label-Halluc ~\cite{Jian2022LabelHalluc}}  & { $67.04 \pm 0.7$} &{ $67.04 \pm 0.7$} & {78.03$\pm$1.0} & {89.37$\pm$0.6}\\
    {FeLMi~\cite{roy2022felmi}}  & { 67.47$\pm$0.8 } & { 86.08$\pm$0.4 }  &  { 78.22$\pm$0.7} & { 89.47$\pm$0.5}\\
    \midrule
    {SEGA~\cite{yang2022sega} $\dagger$}  & { 69.04$\pm$0.3 } & { 79.03$\pm$0.2 }  & { 78.45$\pm$0.2} & {86.00$\pm$0.2} \\
    \midrule
    {SVAE~\cite{xu2022svae} $\ddagger$}  & {72.79$\pm$0.2} & { 80.70$\pm$0.2 }  & {73.25$\pm$0.4} & {78.89$\pm$0.3} \\
    {Textual Inversion~\cite{gal2022image}* $\ddagger$}  & 
    {79.02$\pm$4.1} & {  85.44$\pm$3.9 }  & { -} & {-} \\
    {Stable Diffusion~\cite{StableDiffusion} $\ddagger$}  & 
    {80.92$\pm$0.7} & {  85.05$\pm$0.5 }  & { 86.33$\pm$0.8} & {90.87$\pm$0.5} \\
  {DiffAlign~\cite{DiffAlign} $\ddagger$}  & {82.81$\pm$0.8} & { 88.63$\pm$0.3 }  & {88.99$\pm$0.8 } & {91.96$\pm$0.5} \\
  \midrule
  SeedSelect~\cite{SeedSelect}  & {87.02$\pm$0.8} & {92.08$\pm$0.7}  & {93.43$\pm$0.4} & {\textbf{94.87$\pm$0.4}} \\
  \midrule
\textbf{\ourmethod{}-centroid (ours)} & {\textbf{X}} & { \textbf{93.21$\pm$0.6} }  & {\textbf{X} } & {\textbf{94.85$\pm$0.5}} \\
  \bottomrule
  \end{tabular}}
    \end{minipage}
\hspace{35pt}\hfill
    \begin{minipage}{0.3\linewidth}
    \centering
    \scalebox{0.6}{
	\begin{tabular}{l|Hc}
	\textbf{(b)} & \multicolumn{2}{c}{\textbf{CUB }}    \\
	\textbf{Model} & \textbf{1-shot}     & \textbf{5-way 5-shot} \\ 
        \midrule
	TriNet  \cite{Chen2018MultiLevelSF} &  69.61$\pm$0.5 & 84.10$\pm$0.4   \\  
	FEAT  \cite{Ye_2020_CVPR}&  68.87$\pm$0.2 & 82.90$\pm$0.2   \\
	DeepEMD  \cite{Zhang_2020_CVPR} &  75.65$\pm$0.8 & 88.69$\pm$0.5   \\
        \midrule
        MultiSem  \cite{schwartz2022baby}  $\dagger$ &  76.1$\pm$n/a & 82.9$\pm$n/a   \\
	SEGA~\cite{yang2022sega} $\dagger$  & 84.57$\pm$0.2 & 90.85$\pm$0.2 \\ 
        \midrule
        Stable Diffusion* ~\cite{StableDiffusion} $\ddagger$ & 80.82$\pm$0.4 & 90.61$\pm$0.5 \\ \midrule
        SeedSelect~\cite{SeedSelect}  & 92.55$\pm$0.3 & 96.01$\pm$0.4 \\
        \midrule
        \textbf{\ourmethod{}-centroid (ours)}  & \textbf{X} & \textbf{97.92$\pm$0.2} \\
        \bottomrule
	\end{tabular}
	}
    \end{minipage} 
    \hfill
    \begin{minipage}{.3\linewidth}
    \centering
    \scalebox{0.6}{
\begin{tabular}{l|cH}
    {\textbf{(c)}} & \textbf{ImageNet-LT} & \textbf{iNaturalist}\\
    {\textbf{Model}} &  &\\
    \midrule
    CE & 41.6 & 61.7   \\
    MetaSAug~\cite{Li2021MetaSAugMS} & 47.4 &68.8 \\
    smDragon~\cite{samuel2020longtail} & 47.4 & 69.1 \\
    CB LWS~\cite{Kang2019DecouplingRA} & 47.7 &69.5   \\
    DRO-LT~\cite{Samuel2021DistributionalRL} & 53.5 &69.7 \\
    Ride~\cite{ride} & 55.4 &72.6 \\
    PaCO~\cite{Cui2021ParametricCL} & 53.5 & - \\
    \midrule
    DRAGON~\cite{samuel2020longtail} $\dagger$ & 57.0 & - \\
    \midrule
    VL-LTR~\cite{Tian2021VLLTRLC} $!$ & 70.1 &74.6 \\
    \midrule
    Stable Diffusion~\cite{SeedSelect} $\ddagger$ & 56.4 &65.8 \\
    
\midrule
    SeedSelect~\cite{SeedSelect}$\ddagger$ & 74.9$\pm$0.5 & \textbf{74.8$\pm$0.7} \\
    \midrule
    \textbf{\ourmethod{}-centroid (ours)} & \textbf{78.9$\pm$0.1} & \textbf{75.3$\pm$0.7} \\
    \bottomrule
    \end{tabular}
	}
    \end{minipage}
        \caption{\textbf{(a)-(b) Few-shot recognition}. Comparing \ourmethod{} to prior work on few-shot learning benchmarks. We report our results with 95\% confidence intervals on meta-testing split of the dataset. \textbf{(c) Long-tail recognition.} Comparing \ourmethod{} to prior work on long-tail learning benchmark. Values are accuracy, obtained with a ResNet-50 backbone. * denote results provided by \cite{SeedSelect}. $\dagger$ for multi-modal methods that use class labels as additional information, $\ddagger$ for multi-modal methods that were also pre-trained on external datasets, and $!$ for multi-modal methods that further finetuned foundation models.
        Our approach achieves the best results on all benchmarks. } 
    \label{table:few-shot-bench}
\end{table}

\begin{table}[t!]
\centering
      \scalebox{0.8}{
    \setlength{\tabcolsep}{5pt} 
    \begin{tabular}{l|ccc|ccc|ccc}
    {} & \multicolumn{3}{c|}{\textbf{miniImageNet}} & \multicolumn{3}{c}{\textbf{CIFAR-FS}} & \multicolumn{3}{c}{\textbf{CUB}}\\
    & \textbf{$\bar{T}_{Init}$} & \textbf{$\bar{T}_{Opt}$} & \textbf{$\bar{T}_{Total}$} & \textbf{$\bar{T}_{Init}$} & \textbf{$\bar{T}_{Opt}$} & \textbf{$\bar{T}_{Total}$} & \textbf{$\bar{T}_{Init}$} & \textbf{$\bar{T}_{Opt}$} & \textbf{$\bar{T}_{Total}$} \\
    \midrule
    SeedSelect~\cite{SeedSelect} & {-} & {276} & {276 sec} & {-} & {228 sec} & {228 sec} & {-} & {312 sec} & {312 sec} \\
    \textbf{\ourmethod{}-centroid (ours)} & {28 sec} & {25 sec} & {\textbf{53 sec}} & {29 sec} & {21 sec}& {\textbf{50 sec}} & {31 sec} & {29 sec} & {\textbf{60 sec}}\\
    \bottomrule
    \end{tabular}}
\caption{Comparing convergence time between SeedSelect~\cite{SeedSelect} alone and SeedSelect initialized with seeds found using \ourmethod{}. Note the five-fold reduction in runtime for our method.
}
\label{table:time_bench}
\end{table}

\subsection{Few-shot learning}
Few-shot benchmarks typically provide a limited number of samples per class, along with the corresponding class labels. \ourmethod{} is used as a semantic data augmentation approach: We use the given samples and generate many more samples from that class, which are then used to train a classifier.

\noindent\textbf{Datasets.}
We evaluated \ourmethod{} on three common few-shot classification benchmarks:
\textbf{(1) CUB-200~\cite{CUB}:}
A \textit{fine-grained} dataset comprising 11,788 images of 200 bird species. The classes are divided into three sets, with 100 for meta-training, and 50 each for meta-validation and meta-testing. 
\textbf{(2) miniImageNet~\cite{MatchNet}:}
A modified version of the standard ImageNet dataset \cite{ImageNet}. It contains a total of 100 classes, with 64 classes used for meta-training, 16 classes for meta-validation, and 20 classes for meta-testing. The dataset includes 50,000 training images and 10,000 testing images, with an equal number of images distributed across all classes. 
\textbf{(3) CIFAR-FS~\cite{bertinetto2018cifarFS}:} 
Created from CIFAR-100 \cite{CIFAR100} by using the sampling criteria as miniImageNet. Has 64 classes for meta-training, 16 classes for meta-validation, and 20 classes for meta-testing; each class containing 600 images.

Following all previous baselines, we report classification accuracy as the metric. We report our results with 95\% confidence intervals on the meta-testing split of the dataset.

\noindent\textbf{Compared Methods.}
We conducted a comparative evaluation of our approach with several state-of-the-art methods for few-shot learning. These methods fall into three categories based on their approach. (A) Methods that do not use pre-training nor use class labels during training: \textbf{Label-Hallucination \cite{Jian2022LabelHalluc}} and \textbf{FeLMi \cite{roy2022felmi}}; (B) Methods that use class labels during training: \textbf{SEGA \cite{yang2022sega}}; and (C) Methods that utilize a classifier pre-trained on external data and also use class labels during training: \textbf{SVAE \cite{xu2022svae}}, \textbf{Vanilla Stable Diffusion (Version 2.1) \cite{StableDiffusion}}, \textbf{Textual Inversion \cite{gal2022image}}, \textbf{DiffAlign \cite{DiffAlign}}, and \textbf{SeedSelect \cite{SeedSelect}}. 
The last four methods are semantic data augmentation methods.

\noindent\textbf{Experimental Protocol.}
For a fair comparison with prior work, we follow the training protocol in \cite{DiffAlign} and \cite{SeedSelect}. We generated 1,000 additional samples for each novel class using SeedSelect. It was initialized with seeds found with \ourmethod{} using the centroid and interpolation samples of the few-shot images provided during meta-testing and prompted it with the corresponding class name. We used a ResNet-12 model for performing N-way classification and trained it using cross-entropy loss on both real and synthetic data.

\noindent\textbf{Results.}
Tables \ref{table:few-shot-bench}a and  \ref{table:few-shot-bench}b compare \ourmethod{} 
with SoTA approaches on few-shot classification benchmarks: CUB, miniImageNet, and CIFAR-FS.
\ourmethod{} consistently outperforms all few-shot methods on CUB~\cite{CUB} and miniImageNet~\cite{MatchNet}, and reaches comparable results to SeedSelect~\cite{SeedSelect} on CIFAR-FS~\cite{bertinetto2018cifarFS}. Table \ref{table:time_bench} further compares the mean run time of SeedSelect with and without \ourmethod{} on these datasets, on a single NVIDIA A100 GPU.
The results highlight the competence of our approach in generating rare and fine-grained classes, to reach top accuracy with a five-fold reduction in the runtime.

\subsection{Long-tail learning }
\label{sec:semantic_aug}
\noindent\textbf{Datasets.}
We further evaluated \ourmethod{} on long-tailed recognition task using the \textbf{ImageNet-LT}~\cite{liu2019large} benchmark. 
ImageNet-LT \cite{liu2019large} is a variant of the ImageNet dataset \cite{deng2009imagenet} that is long-tailed. It was created by sampling a subset of the original dataset using the Pareto distribution with a power value of $\alpha = 6$. The dataset contains 115,800 images from 1,000 categories, with the number of images per class ranging from 5 to 1,280.

\noindent\textbf{Compared Methods.} We compared our approach with several state-of-the-art long-tail recognition methods. These methods fall into three categories based on their approach.  (A) Long-tail learning methods that do not use any pretraining nor employ class labels for training: \textbf{CE} (naive training with cross-entropy loss),  \textbf{MetaSAug~\cite{Li2021MetaSAugMS}}, \textbf{smDragon~\cite{samuel2020longtail}}, \textbf{CB LWS~\cite{Kang2019DecouplingRA}}, \textbf{DRO-LT~\cite{Samuel2021DistributionalRL}}, \textbf{Ride~\cite{ride}} and \textbf{Paco~\cite{Cui2021ParametricCL}}. (B) Methods that use class labels as additional information during training: \textbf{DRAGON~\cite{samuel2020longtail}}. (C) Methods that were pre-trained on external datasets and use class labels as additional information during training: \textbf{VL-LTR~\cite{Tian2021VLLTRLC}},  \textbf{Vanilla Stable Diffusion (Version 2.1)~\cite{StableDiffusion}} and \textbf{SeedSelect~\cite{SeedSelect}}.
\textbf{MetaSAug~\cite{Li2021MetaSAugMS}}, \textbf{Vanilla Stable Diffusion} and \textbf{SeedSelect~\cite{SeedSelect}} are semantic augmentations methods.
Note that \textbf{VL-LTR~\cite{Tian2021VLLTRLC}}, compared to other models,  further fine-tuned the pre-trained model (CLIP \cite{radford2021learningCLIP}) on the training sets.

\noindent\textbf{Experimental Protocol.} Following previous methods, we use a ResNet-50 model architecture, train it on real and generated data, and report the top-1 accuracy over all classes on class-balanced test sets.

\noindent\textbf{Results.} Tables \ref{table:few-shot-bench}c  evaluates our approach compared to long-tail recognition benchmarks. \ourmethod{} reaches SoTA results albeit simple, compared to other complex baselines.

\subsection{CLIP finetuning with Synthetic data for image classification}
We follow \cite{SeedSelect,Zhou2021LearningTP, Zhang2021TipAdapterTC} and further examine the advantage of using our method in the generation of synthetic images, as semantic augmentation, for the downstream task of few shot CLIP recognition. In order to deal with rare concepts we use our method together with SeedSelect showing SoTA results in this task \cite{SeedSelect}. To this end we use \ourmethod{} as seed initialization for SeedSelect. 
In the context of this task, we are provided with a limited number of real training samples per class, along with their corresponding class names, and the goal is to fine-tune CLIP.

\textbf{Experimental Setup.}
For a fair comparison we follow the same experimental protocol of \cite{SeedSelect}. Specifically, we generated 800 samples for each class using \ourmethod{}+SeedSelect from the limited real training samples. We then fine-tune a pre-trained CLIP-RN50 (ResNet-50) through mix-training, incorporating both real and generated images.

\textbf{Compared Methods.} We compare our approach with the following baselines: \textbf{Zeor-shot CLIP:} Applying the pre-trained CLIP classifier without fine-tuning; \textbf{CooP \cite{Zhou2021LearningTP}:}  Fine-tuning a pre-trained CLIP via learnable continuous tokens while keeping all model parameters fixed; \textbf{Tip Adapter \cite{Zhang2021TipAdapterTC}:} Fine-tuning
a lightweight residual feature adapter; \textbf{CT \& SD:} Classifier tuning with images generated with SD. \textbf{Textual Inversion \cite{gal2022image}:} Classifier tuning with images generated using personalized concepts (See Supp for implementation detail). Results for \textbf{CT \& SD} were reproduced by us on SDv2.1 using the code published by the respective authors.

\begin{figure}[t!]
    \centering 
    \includegraphics[width=0.8\columnwidth]{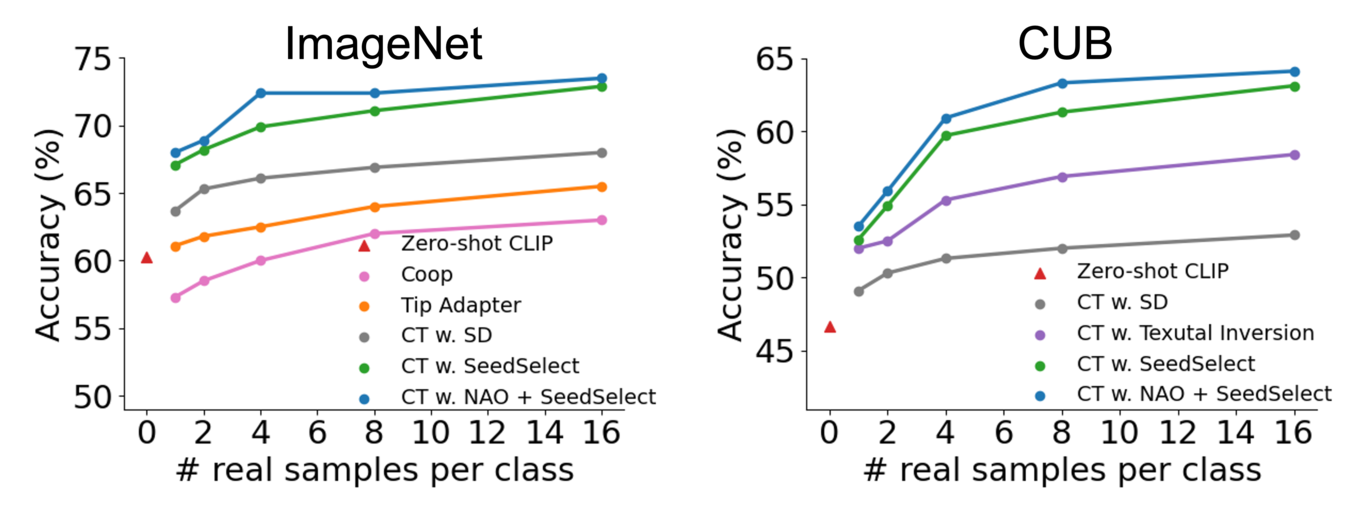}
    \caption{Results for few-shot image recognition, comparing \ourmethod{}+SeedSelect to previous approaches. Fine-tuning a CLIP classifier on SeedSelect generated images initialized with \ourmethod{}  consistently achieves SOTA results across all shot levels. 
}
    \label{fig:fs_clip_eval}
\end{figure}

\noindent\textbf{Results.}
Figure \ref{fig:fs_clip_eval} shows results for the few-shot image recognition task. It demonstrates that initiating seeds using \ourmethod{} improves SeedSelect performance on all shots.

\section{Conclusion}
This paper proposes a set of simple and efficient tools for exploring the seed space of text-to-image diffusion models. By recognizing the role of the seed norm in determining image quality based on the $\chi$ distribution as prior, we introduce a novel method for seed interpolation and define a non-Euclidean metric structure over the seed space. Furthermore, we redefine the concept of a centroid for a set of seeds and present an optimization scheme based on the new distance function. Experimental results demonstrate that these optimization schemes, biased toward the $\chi$ distribution mode, generate higher-quality images compared to other approaches.
Despite the simplicity and effectiveness of our approach, there are several limitations to be aware of. Firstly, compared to standard interpolation and centroid calculation, it involves an additional optimization step. Secondly, our centroid and/or the samples along our interpolation paths may not produce plausible and semantically correct images on their own, necessitating the use of SeedSelect optimization. Lastly, although our method is expected to be applicable to all diffusion models, we specifically evaluated it with the open-source Stable Diffusion~\cite{StableDiffusion} model in this study.

{\small

\bibliographystyle{ieee_cvpr}
\bibliography{FastSS}
}

\renewcommand{\figurename}{Fig. S}
\renewcommand\tablename{Table S}
\newcommand{\figref}[1]{S\ref{#1}}
\newcommand{\tabref}[1]{S\ref{#1}}
\setcounter{table}{0}
\setcounter{prop}{1}
\setcounter{figure}{0}
\definecolor{bright}{rgb}{0.8, 0.1, 0}
\appendix

{\Large{\textbf{Supplemental Material}}}

\section{Proof for proposition 1}

\begin{prop}

Let $W: \mathbb{R}^d \rightarrow \mathbb{R}$ be a strictly positive continuous function. Given $x,y \in \mathbb{R} ^d$, define $f(x,y)$ as the infimum over all path integrals from $x$ to $y$ where the infimum is taken over all piecewise differentiable curves.
\begin{eqnarray}
    f(x,y):=\underset{\mathbf{\gamma}}{\text{inf}} 
    \int_\gamma W(\gamma)ds & \quad \text{s.t.} \quad  \gamma(0)= x , \gamma(1)=y. 
\end{eqnarray}
Then $f$ is a distance function 
\end{prop}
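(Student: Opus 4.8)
The plan is to verify directly, from the definition of $f$, the four axioms of a metric — non-negativity, $f(x,y)=0\iff x=y$, symmetry, and the triangle inequality — using only that $W$ is continuous and strictly positive and that the admissible curve class is closed under reversal and concatenation. The specialization to $\mathcal{P}=\chi^d$ then follows by setting $W=-\log\mathcal{P}$, which is continuous and strictly positive under the stated hypothesis (the negative log-likelihood being strictly positive); the only subtlety, the blow-up of $-\log\chi^d$ at the origin, is harmless since competing paths can be taken to avoid that single point.

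First I would dispatch the easy properties. Finiteness: the straight segment $t\mapsto(1-t)x+ty$ is (piecewise) differentiable with compact image, and since $W$ is continuous it is bounded there, so $f(x,y)<\infty$. Non-negativity: $W>0$ forces $\int_\gamma W\,ds\ge 0$ for every admissible $\gamma$, hence $f(x,y)\ge 0$; and the constant curve at $x$ has zero arclength, giving $f(x,x)=0$. Symmetry: if $\gamma$ joins $x$ to $y$, then $\bar\gamma(t):=\gamma(1-t)$ joins $y$ to $x$ and has the same arclength integral (the line integral with respect to $ds$ is orientation-independent), so the bijection $\gamma\mapsto\bar\gamma$ identifies the two feasible value sets and $f(x,y)=f(y,x)$.

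For the triangle inequality I would use a standard $\varepsilon$-argument exploiting concatenation: given $\varepsilon>0$, choose $\gamma_1$ from $x$ to $y$ with $\int_{\gamma_1}W\,ds\le f(x,y)+\varepsilon$ and $\gamma_2$ from $y$ to $z$ with $\int_{\gamma_2}W\,ds\le f(y,z)+\varepsilon$; the concatenation of $\gamma_1$ and $\gamma_2$ is piecewise differentiable, joins $x$ to $z$, and satisfies $\int W\,ds=\int_{\gamma_1}W\,ds+\int_{\gamma_2}W\,ds\le f(x,y)+f(y,z)+2\varepsilon$, so $f(x,z)\le f(x,y)+f(y,z)+2\varepsilon$; letting $\varepsilon\to 0$ closes this axiom.

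The main obstacle is the remaining implication $x\ne y\Rightarrow f(x,y)>0$ (identity of indiscernibles), since a priori the infimum of a family of strictly positive numbers may still be $0$ — pointwise positivity of $W$ is not enough, one needs a uniform lower bound along every competing path. Here I would use compactness: set $R:=\|x-y\|>0$ and let $m:=\min_{\bar B(x,R/2)}W$, which is strictly positive since $W$ is continuous and positive on the compact ball $\bar B(x,R/2)$. Any admissible curve $\gamma$ from $x$ to $y$ must leave this ball (as $\|y-x\|=R>R/2$); letting $t^\ast$ be the first exit time, the sub-curve $\gamma|_{[0,t^\ast]}$ stays in $\bar B(x,R/2)$, has arclength at least the chord length $\|\gamma(t^\ast)-x\|=R/2$, and carries $W\ge m$ throughout, so $\int_\gamma W\,ds\ge\int_{\gamma|_{[0,t^\ast]}}W\,ds\ge mR/2>0$. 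Taking the infimum over $\gamma$ gives $f(x,y)\ge mR/2>0$, which together with the earlier steps shows that $f$ is a genuine distance on $\mathbb{R}^d$.
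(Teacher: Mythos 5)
Your proof is correct and follows essentially the same route as the paper's: the same reversal argument for symmetry, the same $\varepsilon$-concatenation argument for the triangle inequality, and the same compactness idea (positive minimum of $W$ on a compact set plus a chord-length lower bound on arclength) for showing $f(x,y)>0$ when $x\neq y$. Your version of that last step is in fact slightly cleaner — centering the ball at $x$ with radius $\|x-y\|/2$ and using the first exit time yields the single bound $m\|x-y\|/2$ and avoids the paper's two-case split.
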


\begin{proof}
We need to show that $f$ satisfies three properties: non-negativity, symmetry, and the triangle inequality.

First,  $f(x,y) \geq 0$ for all $x,y \in \mathbb{R}^d$. This follows from the fact that any line integral of a positive function is non-negative, and from the fact that the infimum of a set of non-negative numbers is itself non-negative.
In addition, we have to show that $f(x,y)=0$ iff $x=y$. If $x=y$ then $W$ has zero line integral on the constant curve $\gamma \equiv x$ which implies $f(x,y)\leq 0$, which combined with the non-negativity of $f$ implies $f(x,y)=0$.

In the other direction assume that $x\neq y$ so $\|x-y \| \geq 0$. We will show that for any such $x,y$ there is a constant $c>0$ such that the line integral over $\gamma$ is strictly larger than $c$ which implies $f(x,y)\geq c>0$. Let $K$ be a closed ball that contains $x,y$ in its interior. Let $\epsilon >0$ be the minimal distance between $x$ and $\partial K$.  let $m_K>0$ be the minimal value of $W$ on $K$ that exists due to compactness of $K$, and the positivity and continuity of $W$. finally, Let $\gamma $ be a path from $x$ to $y$.
We split into two cases:
\begin{enumerate}
    \item if $Im(\gamma)\subset K$ then we have $f(x,y) \geq m_k \| x-y\| >0$ since the value of any line integral from $x$ to $y$ is bounded by the distance between the points times the minimal value of the integrand.
    \item Otherwise, $\gamma$ has to intersect with $\partial K$ at some point $x'$ and a similar argument shows that $f(x,y) \geq m_k \| x-x'\|\geq m_k \epsilon >0$.
\end{enumerate}
This implies that for any $\gamma$ the value of the line integral is larger than $c=\min( m_k \epsilon, m_k \|x-y\|)>0$ and concludes this part of the proof. 

Next, it is easy to see that $f(x,y) = f(y,x)$ for all $x,y \in \mathbb{R}^d$. This follows from the fact that the reverse path gives rise to the same line integral value as the original path.

Finally, we show that $f(x,z) \leq f(x,y) + f(y,z)$ for all $x,y,z \in \mathbb{R}^d$. Let $\epsilon>0$. Following the infimum definition there is a path $\gamma_1$ from $x$ to $y$  such that $\int_{\gamma_1} W(\gamma_1)ds<f(x,y)+\epsilon/2$,  and a path $\gamma_2$ from $y$ to $z$ such that $\int_{\gamma_2} W(\gamma_2)ds<f(y,z)+\epsilon/2$. Then, the path $\gamma$ obtained by concatenating $\gamma_1$ and $\gamma_2$ is a piecewise differentiable curve from $x$ to $z$. Therefore,

$$
\begin{aligned}
f(x,z) &\leq \int_{\gamma} W(\gamma) ds \\
&= \int_{\gamma_1} W(\gamma_1) ds + \int_{\gamma_2} W(\gamma_2) ds \\
&< f(x,y) + f(y,z) + \epsilon.
\end{aligned}
$$

Since this is true for any $\epsilon>0$ we get $f(x,z) \leq f(x,y) + f(y,z)$ as needed.
Thus, we have shown that $f$ satisfies all three properties of a distance function, and therefore $f$ is a distance function.
\end{proof}

To prove Proposition \ref{prop:dist} in the main paper, we use the proposition above and the fact that the $\-\log(\chi^d(x))$ is positive for all $x\in \mathbb{R}^d$.

\section{Background and preliminaries}
\label{sec_bg_and_per}
\noindent\textbf{Diffusion models:} 
Text-guided diffusion models aim to map a random noise $z_t$ and textual condition $P$ to an output image $z_0$, which corresponds to the given conditioning prompt.
The mapping between $z_t$ to $z_0$ using $P$, also called the "denoising process", is performed sequentially by a network $\varepsilon_{\theta}$ which is trained to predict noise by minimizing the loss: $\mathcal{L} =  \mathbb{E}_{z_0,y,\varepsilon\sim\mathcal{N}(0,1),t} 
    \left [ || \varepsilon - \varepsilon_\theta(z_t, t, c(y)) ||_2^2 \right ]$.
Here, $z_t$ is a noised sample, and $c(P)$ is the embedding of the text condition, where noise is added to the sampled data $z_0$ according to a timestamp $t$. 
Essentially, the role of the denoising network $\epsilon_{\theta}$ is to accurately eliminate the added noise $\epsilon$ from the latent code $z$ at every time step $t$, based on the input of the noisy latent code $z_t$ and the encoding of the conditioning $c(P)$.
At inference, $z_T$ is sampled from a standard multivariate Gaussian distribution $\mathcal{N}(0,I)$ and the noise is iteratively removed by the trained $\epsilon_{\theta}$ for T steps, resulting $z_0$.

Although our method is universally applicable to all diffusion models, in this study we specifically employed the open-source Stable Diffusion~\cite{StableDiffusion}. Here, the diffusion process is applied on a latent image encoding $z_0 = \mathcal{E}(x_0)$ and an image decoder is employed at the end of the diffusion backward process $x_0 = \mathcal{D}(z_0)$.

\noindent\textbf{Sampling and Inversion:}
The process of mapping an image to noise is a Markov chain starting from $z_{0}$, and gradually adding noise to obtain latent variables $z_{1},z_{2},\dots,z_{T}$.
The sequence of latent variables follows 
$q(z_{1},z_{2},\ldots,z_{T})=\Pi^{t}_{i=1}{q(z_{t} | z_{t-1}})$. A step
in this process is a Gaussian transition $q(z_{t}|z_{t-1}):=\mathcal{N}(z_{t},\sqrt{1-\beta_{t}}z_{t-1},\beta_{t}I)$  parameterized
by a schedule $\beta_{0}, \beta_{1},\ldots,\beta_{T} \in (0,1)$. Note that $z_{t}$ can be expressed as a linear combination of noise and $z_{0}$:
$z_{t} = \sqrt{\alpha_{t}z_{0}}+\sqrt{1-\alpha_{t}}w$, where $w\sim \mathcal{N}(0,I)$ and $\alpha_{t}=\Pi^{t}_{i=1}(1-\beta_{i})$.

Reversing the process is not immediately obvious and thus several schedulers were proposed \cite{DDIM, karras2022elucidating,ho2020denoising,Gu2021VectorQD}. In this paper, we employ DDIM~\cite{DDIM} scheduler, a popular deterministic scheduler. Other deterministic scheduler would be suitable, and we show in section \ref{app:scheduler} below that our method performs well with other schedulers. For DDIM, \cite{DDIM} proposed denoising in the following way:
$z_{t-1}=\sqrt{\frac{\alpha_{t-1}}{\alpha_{t}}}z_{t}+(\sqrt{\frac{1}{\alpha_{t-1}}-1}-\sqrt{\frac{1}{\alpha_{t}}-1})\cdot\varepsilon_{\theta}(z_{t},t,c(y))$.

Since DDIM is deterministic, its inversion can be obtained based on the assumption that the Ordinary Differential Equation (ODE) process can be reversed in the limit of small steps. 
$z_{t+1}=\sqrt{\frac{\alpha_{t+1}}{\alpha_{t}}}z_{t}+(\sqrt{\frac{1}{\alpha_{t+1}}-1}-\sqrt{\frac{1}{\alpha_{t}}-1})\cdot\varepsilon_{\theta}(z_{t},t,c(y))$.
See ~\cite{Dhariwal2021DiffusionMB,DDIM} for more details. Using this technique we can inverse a real image $z_{0}$ to its latent $z_{t}$ in the seed space.

\section{Analysis of Approximation and Optimization Errors}
In this section, we delve into the approximation and optimization aspects of our proposed methodology, particularly with regard to Equation (1) and Equation (2). Solving the optimization problem stated in Equation (2) yields an approximation to its continuous counterpart, Equation (1). There are two primary error components:

(i) Approximation Error: This error arises from the discreteness of the integral in Equation (2) as compared to the continuous integral in Equation (1). However, we assert that due to the smoothness of the function f(x), the minimizer of Equation (1) is expected to exhibit smooth behavior. Consequently, we can use piecewise linear paths in Equation (2) to approximate it with arbitrarily small error by increasing the number of points.

(ii) Optimization Error: This error stems from the optimization process applied to Equation (2). In line with common practice in deep learning, we employ first-order optimization techniques to optimize a non-smooth function. We show that our experimental results and 2D visualizations support the assertion that our optimization procedure converges to a satisfactory solution.

From a practical standpoint, despite the potential lack of numerical exactness, our approach consistently yields high-quality results, as indicated by the Frechet Inception Distance (FID) measures, which serve as a metric for evaluating the quality of generated images.

\section{Discretizing the centroid-finding problem}
To approximate the solution to problem  Eq. \ref{eq:centroid} in practice, we discretize the paths to the centroid in a similar fashion to Equation \eqref{eq:discretization_interp} by representing them as a sequence of piece-wise linear segments. We then replace the integral with its corresponding Riemann sum over these piece-wise linear paths:
\begin{equation} 
    \label{eq:discretization_interp}
    \begin{aligned}
    & \underset{c,x^{l}_0,\dots,x^{l}_n \forall l}{\text{minimize}}
    & & -\log{\mathcal{P}(c)}-\sum_{l=1}^{k}\sum_{i=1}^n \log{\mathcal{P}\bigg(\frac{x^{l}_i+x^{l}_{i-1}}{2}\bigg) \|x^{l}_i-x^{l}_{i-1} \|}\\
    &  & &\text{s.t.} \quad  x^{l}_0= c, x^{l}_n=z^{l}_{T}, \quad \|x^{l}_i-x^{l}_{i-1} \|\leq \delta, l\in \{1,\ldots,k\}, i\in \{1,\ldots,n\}
    \end{aligned}
    \end{equation}
where $i$ indexes the points along a path between the centroid and an inversion point, and $l$ indexes the \textit{paths}. The minimization drives the optimization toward a centroid $c$ that has a minimum overall distance to all reference points in terms of our new metrics.

\section{FID and per-class accuracy with a pre-trained classifier}
\label{sec:per_class_acc}

Figures \ref{fig:norm_examples}(b)-\ref{fig:norm_examples}(c) in the main paper provide FID and per-class accuracy of a pre-trained classifier for images generated by Stable Diffusion. Tables \ref{laion-imagenet-bench},\ref{table:interpolation_fid}  further evaluate different interpolation and centroid evaluations with these metrics. Here we provide more details about these experiments.

FID (Fréchet inception distance) \cite{Szegedy2015RethinkingTI, Heusel2017GANsTB} was calculated between generated images (typically 100 images per class) and real ImageNet1k test images (50 images for each class). To calculate the FID score, we used the features of a pre-trained ImageNet1k classifier. Specifically, we used InceptionV3 with $64$-dim feature vectors.

For per-class accuracy, we follow \cite{SeedSelect} and used a SoTA pre-trained ImageNet classifier. Specifically, we used MaxViT image classification model~\cite{tu2022maxvit} pre-trained on ImageNet-21k (21843 Google specific instance of ImageNet-22k) and fine-tuned on ImageNet-1k. It achieves a top-1 of 88.53\% accuracy and 98.64 top-5 accuracy on the test-set of ImageNet. This classifier measures the correctness of generated images.

\section{A \textit{Spherical Projection} baseline}
The "Spherical Projection" baseline~\cite{Buss2001SphericalAA} in the main paper tries to find a centroid on a sphere. Specifically, it optimizes a point that minimizes the sum of geodesic distances between all input points (inversions). 
This can be formulated as follows. Given $z^{1}_{T},z^{2}_{T},\ldots,z^{k}_{T}$ points (projected) on a sphere with radius $R$. We would like to find a point $c$ such that

\begin{equation}
    c^{*} = ~\underset{c}{\text{argmin}}{R\sum^{k}_{i=1}{\arccos(c \cdot z^{i}_{T}})} \quad,
\end{equation}
where $R\cdot\arccos(c \cdot z^{i}_{T})$ the "arc length" and the geodesic distance between the centroid $c$ and another point $z^{i}_{T}$. These extrema can be solved using Lagrangian multipliers, with $c$ on the sphere, as the constraint: $L(c, \lambda) = \sum_{i=1}^{k} R\cdot\arccos\left( c \cdot z^{i}_{T} \right) + \lambda (R - c \cdot c)$. Taking partial derivatives results with:
\begin{equation}
    \frac{\partial L}{\partial C_j}(c, \lambda) = - R\sum_{i=1}^{k} \frac{z^{i}_{T}(j)}{\sqrt{1 - (c \cdot z^{i}_{T})^2}} - 2 \lambda c_j
\end{equation}
\begin{equation}
    \frac{\partial L}{\partial \lambda}(c, \lambda) = R^2 - c \cdot c \quad,
\end{equation}
where $z^{i}_{T}(j)$ is the $j-$th element of $z^{i}_{T}$. The extrema is then:
\begin{equation}
  c = m \sum_{i=1}^{k} \frac{z^{i}_{T}}{\sqrt{1 - (c \cdot z^{i}_{T})^2}}  , 
\end{equation}
where $m$ a constant selected such that the second normalization constraint holds. An iterative equation was obtained, and the centroid is determined through iterative processes until convergence is achieved.

\begin{figure}[t!]
    \centering 
    \includegraphics[width=0.7\columnwidth]{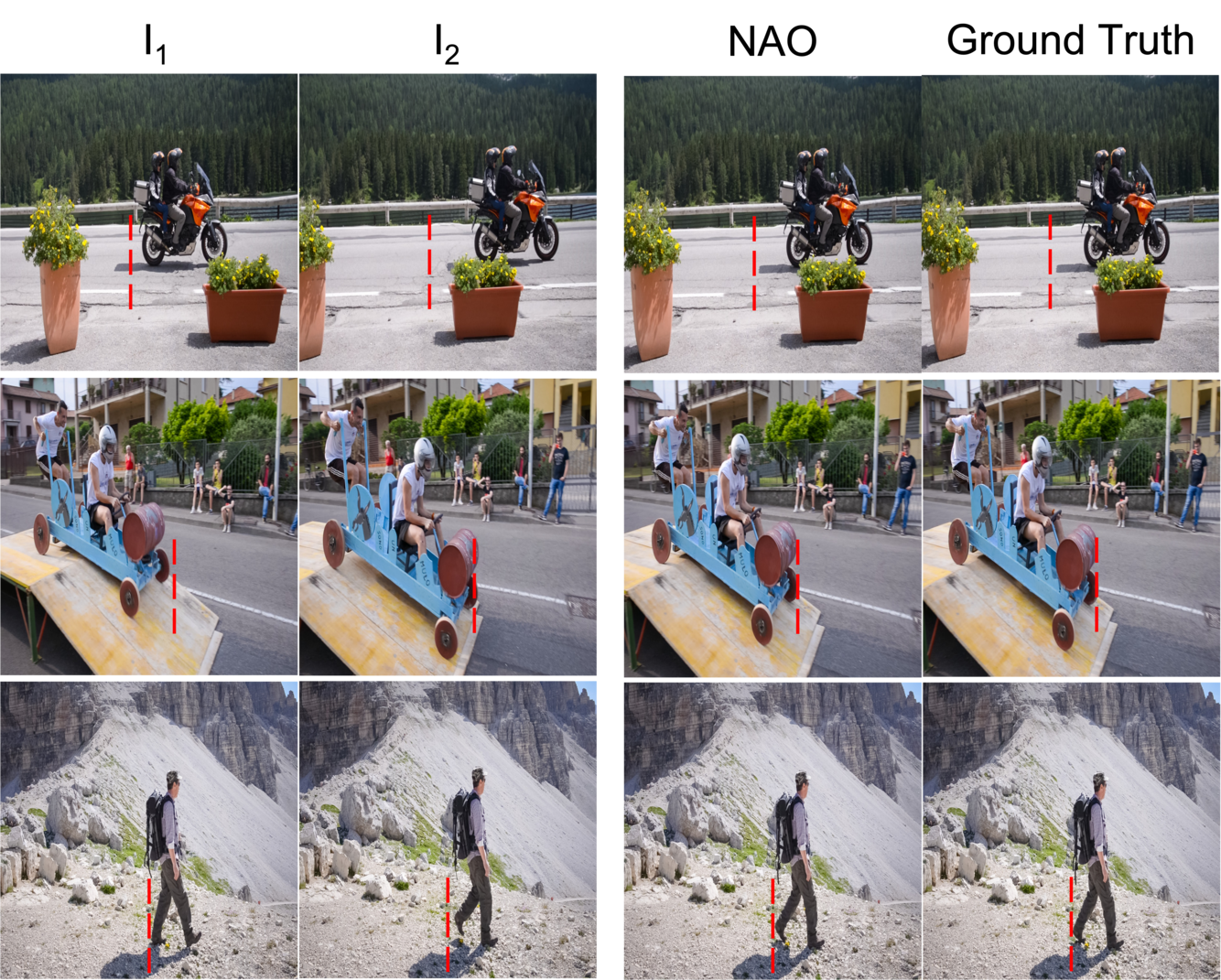}
    \caption{In the video frame interpolation task, NAO can produce intermediate frames through interpolating between seed inversions of two provided frames (I1 and I2). The red overlaid lines are at fixed locations and were added for sake of visualization. The generated intermediate frames from seed interpolation are highly close to the ground truth, and contain the subtle temporal changes.   
    }
    \label{fig:video_qual}
\end{figure}

\begin{table}[t!]
\centering
      \scalebox{0.8}{
    \setlength{\tabcolsep}{5pt} 
    \begin{tabular}{l|cc}
    {} & \textbf{PSNR} & \textbf{SSIM}\\
    {} & \textbf{(Higher is better)} & \textbf{(Lower is better)}\\
    \midrule
    \textbf{MCVD}~\cite{voleti2022mcvd} & {18.646	} & {0.705} \\
    \textbf{LDMVFI}~\cite{danier2023ldmvfi} & {25.541	} & {0.833
} \\
    \textbf{\ourmethod{}-path (ours)} & {25.413	} & {0.813} \\
    \bottomrule
    \end{tabular}}
\caption{Results of \ourmethod{} for video frame interpolation task on the DAVIS~\cite{Perazzi_2016_CVPR} dataset. \ourmethod{} achieves comparable results to existing methods.
}
\label{table:video_int}
\end{table}

\section{Video frame interpolation}
We further applied our approach to the task of video frame interpolation, aiming to generate an intermediate frame between existing consecutive frames (I1 and I2) in a video sequence. Using \ourmethod{}, we interpolated between the seed inversions of two given frame images, optimizing 50 points, and generating an image from the middle points. Example images in Figure \figref{fig:video_qual} illustrate NAO's ability to create intermediate frames. To further demonstrate NAO’s capability, we conducted also a quantitative evaluation. To this end, we followed the experiment outlined in \cite{danier2023ldmvfi,voleti2022mcvd}, focusing on the DAVIS dataset~\cite{Perazzi_2016_CVPR}: a widely acknowledged benchmark for Video Frame interpolation tasks. The evaluation of predicted frames against the ground truth was carried out using established metrics like PSNR and SSIM. Table \tabref{table:video_int} demonstrates that our interpolation approach achieves comparable results to existing methods specifically tailored for video interpolation, despite solely using a pre-trained text-to-image model.

\section{Experiments on few-shot and long-tail learning}
We provide here additional information and implementation details regarding experiments on Few-shot and Long-tail learning.

\paragraph{Few-shot learning:} We followed the training protocol of \cite{DiffAlign} and evaluated \ourmethod{} for 600 episodes. At each episode, base and novel classes are randomly selected. We generated 1000 additional samples for each novel class using \ourmethod{} and SeedSelect~\cite{SeedSelect}. Specifically, SeedSelect was
initialized with centroids and samples from the interpolation paths found by \ourmethod{} using the 5-shot images provided during meta-testing. The mean accuracy is calculated for each episode. Finally, we reported the mean accuracies along all episodes with 95\% confidence intervals.

For \ourmethod{} + SeedSelect to generate 1000 total images for each class, we optimized paths with a length of 200 points between the centroid of the 5-shot training samples and the samples themselves.

To ensure a fair comparison, we employed the ResNet-12 architecture as the backbone network. On top of the feature extractor, we incorporate a two-layer MLP for N-way classification. A stochastic gradient descent (SGD) optimizer is used with a momentum of 0.9. The learning rates for the backbone and classifier are respectively set to 0.025 and 0.05, accompanied by a weight decay of 5e-4. During training, standard data augmentations such as color jittering, random crop, and horizontal flips are applied within a minibatch consisting of 250 images.

\paragraph{Long-tail learning:} \ourmethod{} was further evaluated as a semantic augmentation approach on the ImageNet-LT~\cite{liu2019large} benchmark. Here, we generated samples for each class until the combined total of real and generated samples equaled the count of the class with the highest number of samples in the dataset, resulting in a uniform data distribution. Similarly to the approach used for few-shot benchmarks, we generated images for each class by initializing SeedSelect with centroids and samples obtained from interpolation paths discovered between training samples to their centroid. We optimized paths ranging in length from 10 to 200, adjusting based on the available number of samples for each class. We measure the accuracy for each class and report the mean for all classes.

For a fair comparison, we follow prior work~\cite{liu2019large, samuel2020longtail,Kang2019DecouplingRA} and train a ResNet-50 backbone. The backbone was trained on real and generated data using a stochastic gradient descent (SGD) optimizer with a momentum of 0.9. Additionally, we used the cosine learning-rate schedule to train the network. During training, standard data augmentations such as color jittering, random crop, and horizontal flips are applied within a minibatch consisting of 250 images.

\paragraph{Hyper-parameter tuning:}
We determined the number of
training epochs (early-stopping), and the learning rate of the backbone network using the validation set provided.

\begin{figure}[t!]
    \centering 
    \includegraphics[width=0.8\columnwidth]{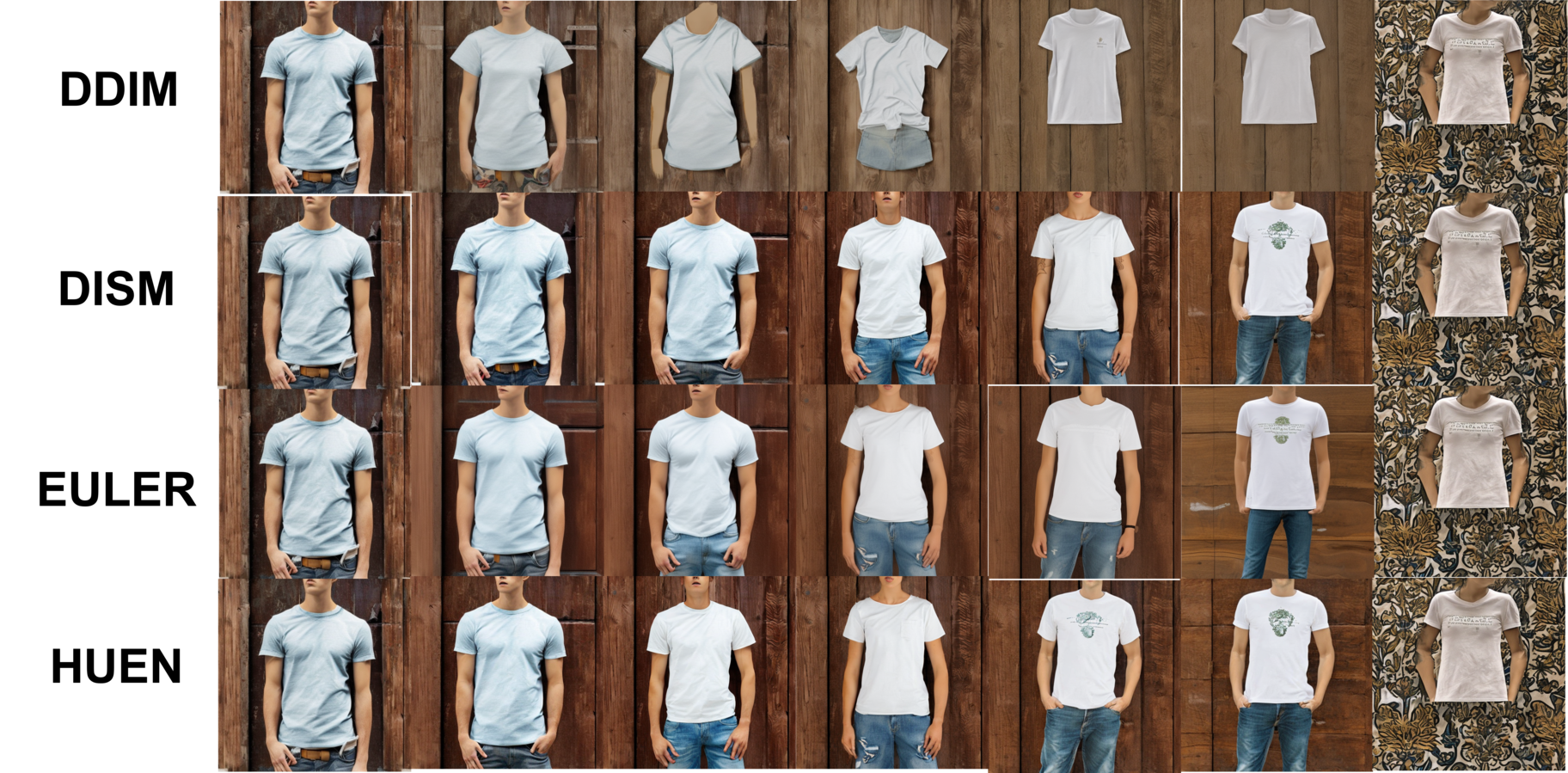}
    \caption{Images generated by different schedulers on the same interpolation path between two seeds optimized by \ourmethod{}. Images were generated with 20 denoising steps.
    Our approach results in high-quality images regardless of the scheduler used in the method.}
    \label{fig:schedulers_sensitivity}
\end{figure}

\section{Sensitivity to scheduler}
\label{app:scheduler}
In the main paper, we used DDIM scheduler both for the diffusion forward process (seed to image) and inversion process (image to seed). Here we show that our method can be also used with another deterministic scheduler. Figure \figref{fig:schedulers_sensitivity} shows images generated by sampling from \ourmethod{}-path using DDIM~\cite{DDIM}, DEIS~\cite{Zhang2022FastSO} Determenistic Euler Scheduler~\cite{karras2022elucidating} and Heun~\cite{karras2022elucidating} schedulers. Note that while Euler and Heun schedulers rescale the input latent, they still initiate their process with a sample from a multi-variant Gaussian distribution. This characteristic allows our approach to be potentially applicable to all schedulers.

\section{Ablation study}
\subsection{Dependence on the number of samples (``shots'')}

\begin{figure}[t!]
    \centering 
    \includegraphics[width=0.4\columnwidth]{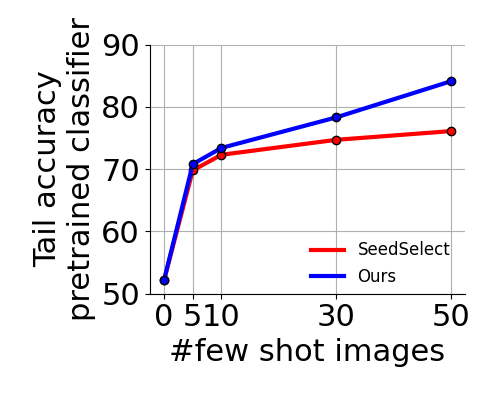}
    \caption{Effect of number of few-shot samples on accuracy
of a pre-trained ImageNet-1k classifier.
    }
    \label{fig:few-shot-acc}
\end{figure}

\begin{figure}[t!]
    \centering 
    \includegraphics[width=\columnwidth]{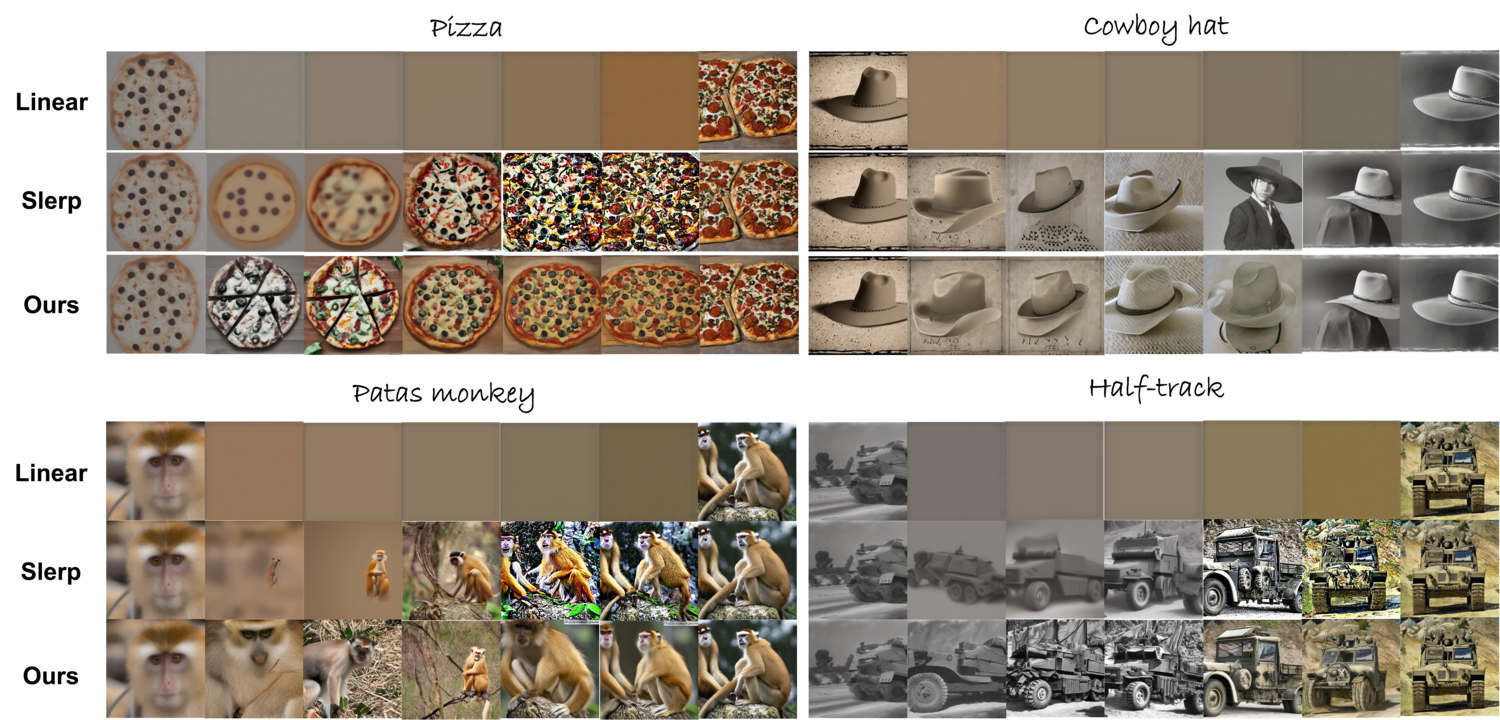}
    \caption{Additional qualitative comparison for different interpolation methods between two image seeds.
    }
    \label{fig:interpolation_supp}
\end{figure}

\begin{figure}[t!]
    \centering 
    \includegraphics[width=0.7\columnwidth]{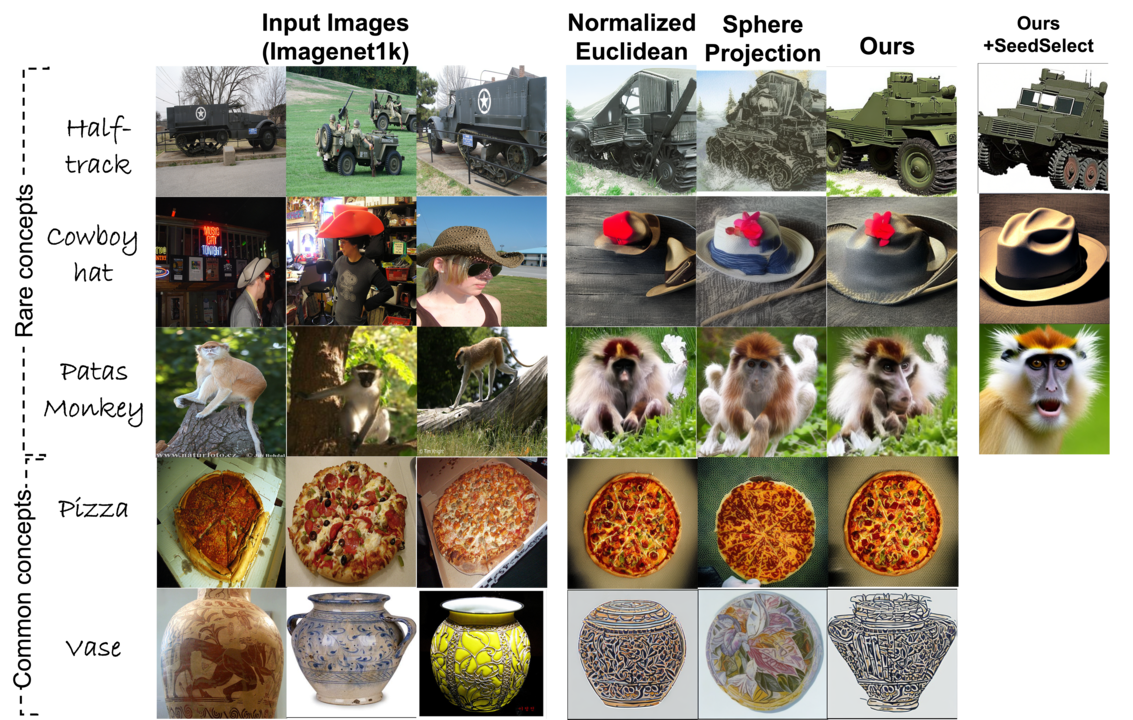}
    \caption{Additional qualitative comparison for different centroid optimization approaches on common and rare concepts of ImageNet1k. We further initialized SeedSelect~\cite{SeedSelect} with the centroids and run it for up to 3 iterations
($\sim$15 sec on a single A100 GPU). While alternative methods demonstrate satisfactory performance with common concepts, they frequently lack visual credibility when employed with uncommon objects.}
    \label{fig:centroid_supp}
\end{figure}

Figure \figref{fig:few-shot-acc} illustrates the comparison between our suggested method \ourmethod{} and SeedSelect~\cite{SeedSelect} regarding the impact of varying the number of training samples (\#shots) on generation quality. The evaluation was performed on tail classes of ImageNet1k \cite{SeedSelect}. We tested per-class tail accuracy for changing the number of training samples using a pre-trained ImageNet1k classifier (see Section \ref{sec:per_class_acc}). The increase in the number of training samples leads to the generation of more semantically correct images and it is evident that \ourmethod{} achieves superior accuracy results compared to SeedSelect~\cite{SeedSelect}.

\subsection{Diversity}

We follow \cite{SeedSelect} and analyze the diversity of images generated by \ourmethod{}. 
To analyze the diversity of our generated images, we used two metrics: The number of statistically different beans (NDB) ~\cite{richardson2018gans} and the entropy across clusters. While NDB tries to locate mode collapse, the entropy metric makes sure that the generated images are diverse. 

Specifically, we divided the test set of each ImageNet class into 50 clusters (one for each test sample). Subsequently, we allocated the images generated by \ourmethod{} to these clusters and evaluated the NDB and entropy within the clusters. On average, the entropy across all classes was 3.9 bits, whereas the test set clusters had an entropy of 5 bits. Furthermore, the NDB value was 2.21. These findings indicate that the generated images possess a level of diversity comparable to real images, demonstrating a lack of mode collapse.

We acknowledge that there are potential alternative methods for quantifying diversity~\cite{Udandarao2022SuSXTN}. The exploration of diversity is still in its early stages and has not been thoroughly examined. However, this area holds significant potential for future research and investigation, offering valuable opportunities for further exploration.

\begin{figure}[t!]
    \centering 
    \includegraphics[width=0.6\columnwidth]{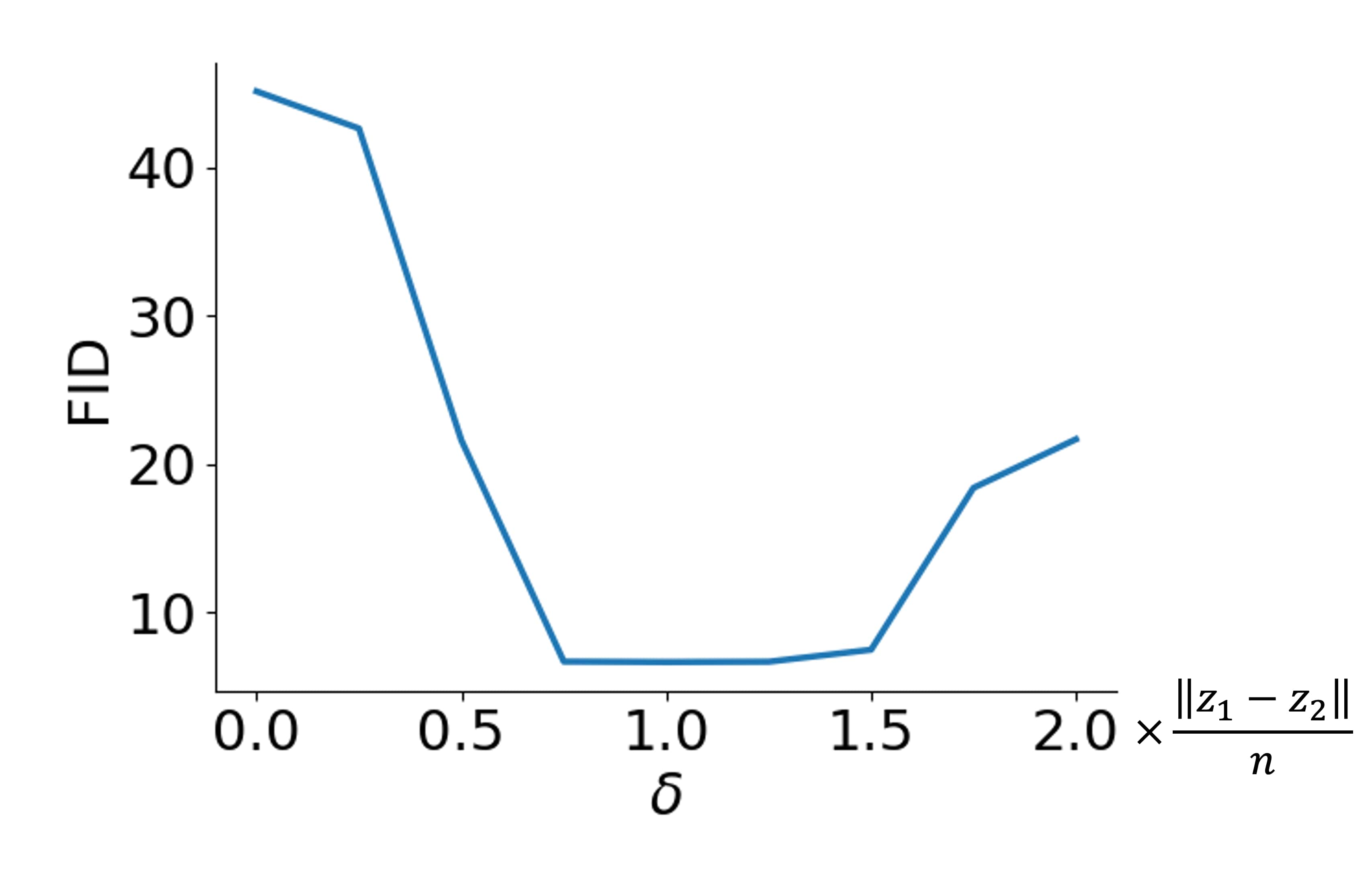}
    \caption{Analysis examining the influence of various $\delta$ values on the FID of images generated along the interpolation trajectory. The investigation demonstrates that extremely low or high delta values detrimentally impact FID scores. Such values either result in excessive proximity between points, causing overlap or overly disperse them, steering the path into regions of low likelihood.   
    }
    \label{fig:delta_analysis}
\end{figure}

\subsection{$\delta$ analysis}
In the experiments done in the main paper, we set $\delta$ to be $||z_1 - z_2||/n$, where 
$z_1$ and $z_2$ are seed inversions of real images ($x_1$
 and $x_2$), and $n$ is the number of interpolation points to be optimized. We further conducted an in-depth analysis to investigate the impact of different delta values on the FID of images generated along the interpolation path. Figure \figref{fig:delta_analysis} presents the results of this analysis. The findings reveal that excessively low or high delta values adversely affect the FID, as they either constrain the points too closely together, causing overlap, or spread them too far apart, leading the path into low-likelihood regions.

\section{Additional qualitative results}
Figure \figref{fig:interpolation_supp} provides qualitative analysis and compares \ourmethod{} to LERP and SLERP~\cite{SLERP}. Figure \figref{fig:centroid_supp} further provides a qualitative comparison between \ourmethod{} and other centroid finding approaches. It is evident from the figures that our approach samples better paths resulting in higher quality images, particularly in rare concepts.

\end{document}